\documentclass[a4paper,11pt]{article}

\usepackage[margin=1.2in]{geometry}

\usepackage{float}
\usepackage{subcaption}
\usepackage{amsmath}
\usepackage{algorithmic}
\usepackage{algorithm}
\usepackage{amsthm}
\usepackage{amsfonts}
\usepackage{comment}
\usepackage{graphicx}
\usepackage{hyperref}
\usepackage{color}
\usepackage{multirow}

\newtheorem{theorem} {Theorem}
\newtheorem{lemma} {Lemma}

\newtheorem{corollary} {Corollary}
\newtheorem{assumption} {Assumption}
\newtheorem{observation} {Observation}

\newtheorem{remark} {Remark}

\def\u{{\mathbf{u}}}
\def\v{{\mathbf{v}}}

\def\X{{\mathbf{X}}}
\def\Y{{\mathbf{Y}}}
\def\A{{\mathbf{A}}}
\def\M{{\mathbf{M}}}

\def\B{{\mathbf{B}}}

\def\V{{\mathbf{V}}}

\def\var{{\mathbf{Var}}}

\newcommand{\mX}{\mathcal{X}}

\newcommand{\mF}{\mathcal{F}}

\newcommand{\mS}{\mathcal{S}}
\newcommand{\mbS}{\mathbb{S}}
\newcommand{\mD}{\mathcal{D}}
\newcommand{\E}{\mathbb{E}}

\newcommand{\nnz}{\textrm{nnz}}
\newcommand{\trace}{\textrm{Tr}}
\newcommand{\rank}{\textrm{rank}}
\newcommand{\reals}{\mathbb{R}}

\title{On the Convergence of Stochastic Gradient Descent with Low-Rank Projections for Convex Low-Rank Matrix Problems}
\author{Dan Garber \\
Technion - Israel Institute of Technology \\
{\small{dangar@technion.ac.il}}}
\date{}

\begin{document}
 \maketitle

\begin{abstract}
We revisit the use of Stochastic Gradient Descent (SGD) for solving convex optimization problems that serve as highly popular convex relaxations for many important low-rank matrix recovery problems such as \textit{matrix completion}, \textit{phase retrieval}, and more. The computational limitation of applying SGD to solving these relaxations in large-scale is the need to compute a potentially high-rank singular value decomposition (SVD) on each iteration in order to enforce the low-rank-promoting constraint. We begin by considering a simple and natural sufficient condition so that these relaxations indeed admit low-rank solutions. This condition is also necessary for a certain notion of low-rank-robustness to hold. Our main result shows that under this condition which involves the eigenvalues of the gradient vector at optimal points, SGD with mini-batches, when initialized with a ``warm-start" point, produces iterates that are low-rank with high probability, and hence only a low-rank SVD computation is required on each iteration. This suggests that SGD may indeed be practically applicable to solving large-scale convex relaxations of low-rank matrix recovery problems. Our theoretical results are accompanied with supporting preliminary empirical evidence. As a side benefit, our analysis is quite simple and short.
\end{abstract}

\section{Introduction}
This paper is concerned with convex optimization formulations and algorithms for low-rank matrix recovery. Low-rank matrix recovery problems have numerous applications in machine learning, statistics and related field and have received much attention in recent years, with some of the most well known problems / applications being \textit{matrix completion} \cite{Candes09, Recht11, Jaggi10,ge2016matrix}, \textit{phase retrieval} \cite{candes2015phase, netrapalli13, Yurtsever17}, \textit{robust PCA} \cite{Candes11, wright2009robust, netrapalli2014non, yi2016fast, mu2016scalable}, and more. However, these optimization problems are often NP-Hard to solve due to the explicit low-rank constraint / objective. To circumvent this difficulty, a significant body of work in recent years has been devoted to study convex relaxations to these problems, which are computationally tractable, and also often well motivated in terms of their ability to recover the correct low-rank solution (usually under certain statistical assumptions), see for instance \cite{Candes09, Recht11, candes2015phase, Candes11,wright2009robust}. These convex relaxations replace the explicit non-convex low-rank constraint / objective with a convex surrogate such as the sum of the singular values of the matrix, often called the nuclear norm, or the trace norm. Importantly, these convex relaxations can be formulated in the following canonical form (see for instance \cite{Jaggi10}), which is also the main optimization problem under consideration in this paper:
\begin{eqnarray}\label{eq:optProb}
\min_{\X\in\mS_n}f(\X).
\end{eqnarray}
Here $\mS_n$ denotes the spectrahedron in $\mbS^n$ (space of $n\times n$ real symmetric matrices), i.e., $\mS_n := \{\X\in\mbS^n ~|~ \X\succeq 0,~\trace(\X)=1\}$. Throughout this work, $f$ is assumed $\beta$-smooth (Lipschitz gradient) and convex. 

Additionally, motivated by cases in which $f(\X)$ admits a finite-sum structure, i.e., $f(\X) := \frac{1}{m}\sum_{i=1}^mf_i(\X)$, where the number of functions $m$ is large and hence the computation of exact gradients of $f(\cdot)$ is prohibitive, or when $f(\X)$ is given by an expectation w.r.t. some unknown distribution, i.e., $f(\X) :=\E_{g\sim\mD}[g(\X)]$, and only a finite sample drawn i.i.d. form $\mD$ is available (e.g., in statistically-motivated scenarios), we consider stochastic optimization methods for solving Problem \eqref{eq:optProb}. Concretely, we assume the standard generic model for first-order stochastic optimization, in which $f(\cdot)$ is given by a stochastic first-order oracle, which when queried with some point $\X\in\mbS^n$ returns a random matrix $\widehat{\nabla}\in\mbS^n$ satisfying the following standard assumptions:
\begin{align*}
i.~\E[\widehat{\nabla}~|~\X] = \nabla{}f(\X), ~ ii.~\Vert{\widehat{\nabla}}\Vert_F \leq G, ~\Vert{\widehat{\nabla}}\Vert \leq B, ~ iii.~ \E[\Vert{\widehat{\nabla}-\nabla{}f(\X)}\Vert_F^2~|~\X] \leq \sigma^2,
\end{align*}
for some $G,B,\sigma^2 >0$, where for any matrix $\M\in\mbS^n$. $\Vert{\M}\Vert_F$ denotes the Frobenius (Euclidean) norm, and $\Vert{\M}\Vert$ denotes the spectral norm (largest singular value).

While Problem \eqref{eq:optProb} is convex, it is still highly challenging to solve in large-scale via traditional first-order methods, such as projected gradient methods \cite{Nesterov13, Bubeck15, HazanK14, Rakhlin12} or conditional gradient-based methods \cite{Jaggi13b, Hazan12, Lan16, Hazan16, Garber18}, since these require a potentially high-rank singular value decomposition (SVD) computation on each iteration (which can take as much as $O(n^3)$ runtime), and / or to store potentially high-rank matrices in memory (despite the often implicit assumption that the optimal solution is low-rank).

As a starting point let us recall the structure of the Euclidean projection onto the spectrahedron $\mS_n$, which we denote as $\Pi_{\mS_n}[\cdot]$.

\begin{lemma}[Projection onto the spectrahedron]\label{lem:spectrahedronProj}
Let $\M\in\mbS^n$ and write its eigen-decomposition as $\M = \sum_{i=1}^n\lambda_i\v_i\v_i^{\top}$. Then, it holds that
$\Pi_{\mS_n}[\M] = \sum_{i=1}^n\max\{0,~\lambda_i-\lambda\}\v_i\v_i^{\top}$,
where $\lambda\in\reals$ is the unique scalar satisfying $\sum_{i=1}^n\max\{0,~\lambda_i-\lambda\} =1$.
\end{lemma}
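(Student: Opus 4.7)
The plan is to reduce the projection onto the spectrahedron to the much simpler problem of projecting the eigenvalue vector of $\M$ onto the probability simplex in $\reals^n$, and then solve that one-dimensional-looking problem via KKT.

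First, I would expand $\Vert\X-\M\Vert_F^2 = \Vert\X\Vert_F^2 + \Vert\M\Vert_F^2 - 2\trace(\X\M)$. For any $\X \in \mS_n$ with eigenvalues $\mu_1,\ldots,\mu_n \geq 0$, the first term equals $\sum_i \mu_i^2$ and depends only on the spectrum of $\X$. By the standard trace inequality for symmetric matrices (a consequence of von Neumann's inequality), $\trace(\X\M) \leq \sum_i \mu_i^{\downarrow}\lambda_i^{\downarrow}$, with equality attained when $\X$ shares an eigenbasis with $\M$ and the eigenvalues are matched in the same decreasing order. Hence, among all $\X \in \mS_n$ with a prescribed spectrum, $\Vert\X-\M\Vert_F^2$ is minimized by $\X = \sum_i \mu_i \v_i\v_i^\top$, and the projection problem collapses to
\[
\min\;\sum_{i=1}^n (\mu_i-\lambda_i)^2 \quad\text{subject to}\quad \mu_i\geq 0,\;\; \sum_{i=1}^n \mu_i = 1.
\]

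Next, I would solve this simplex projection via KKT. Introducing a multiplier $\lambda\in\reals$ for the equality constraint and multipliers $\gamma_i\geq 0$ for the inequalities $\mu_i\geq 0$, stationarity gives $\mu_i = \lambda_i-\lambda+\gamma_i$, while complementary slackness $\gamma_i\mu_i=0$ combined with $\gamma_i\geq 0$ forces $\mu_i = \max\{0,\lambda_i-\lambda\}$. Substituting this into the equality $\sum_i \mu_i = 1$ yields the defining equation for $\lambda$ in the lemma.

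Finally, I would justify uniqueness of $\lambda$: the function $g(\lambda) := \sum_i \max\{0,\lambda_i-\lambda\}$ is continuous, piecewise linear, non-increasing, equals $0$ for $\lambda \geq \max_i\lambda_i$, tends to $+\infty$ as $\lambda\to-\infty$, and is strictly decreasing on the set where it is positive, so $g(\lambda)=1$ admits exactly one solution. The main technical point is the eigenvector-alignment step, which is immediate from von Neumann once quoted, but one has to note that even when $\M$ has repeated eigenvalues (so the decomposition $\sum_i \lambda_i\v_i\v_i^\top$ is non-unique), the minimizer $\Pi_{\mS_n}[\M]$ is nonetheless unique by strict convexity of $\Vert\cdot-\M\Vert_F^2$ on the compact convex set $\mS_n$, and the formula is invariant under the choice of eigenbasis within each eigenspace.
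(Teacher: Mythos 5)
Your proposal is correct. Note that the paper does not actually prove this lemma --- it is stated as a recalled, standard fact about Euclidean projection onto the spectrahedron --- so there is no in-paper argument to compare against; your derivation is the canonical one and it is complete. The two-step structure (eigenvector alignment via von Neumann's trace inequality, then KKT for the simplex projection of the eigenvalue vector) is sound: the reduction is legitimate because the simplex projection of a non-increasingly sorted vector $(\lambda_1,\dots,\lambda_n)$ is itself non-increasingly sorted (your formula $\mu_i=\max\{0,\lambda_i-\lambda\}$ is monotone in $\lambda_i$), so the order-matching required for equality in von Neumann is automatically consistent with the unconstrained simplex minimizer. Your closing remarks on the uniqueness of the threshold $\lambda$ (via strict monotonicity of $g$ where it is positive) and on the well-definedness of the projection under repeated eigenvalues (strict convexity of the squared distance on the convex compact set $\mS_n$) tie up the only loose ends one could reasonably worry about.
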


From the lemma it is quite obvious why at worst-case computing this projection may require a high-rank SVD (note that given the SVD of $\M$, computing the threshold parameter $\lambda$ could be done in $O(n\log{}n)$ time via sorting). From this lemma we also make the following simple yet important observation.
\begin{observation}[Low-rank projection requires low-rank SVD]\label{observ:lowRank}
Given a matrix $\M\in\mbS^n$, if $\rank\left({\Pi_{\mS_n}[\M]}\right) =r$, then only the top-$r$ components in the SVD of $\M$ (corresponding to the rank-$r$ matrix $\sum_{i=1}^r\lambda_i\v_i\v_i^{\top}$) are required to compute the projection. Hence, only a rank-$r$ SVD of $\M$ is required. \footnote{In particular, according to Lemma \ref{lem:spectrahedronProj}, if $\M$ admits the eigen-decomposition $\sum_{i=1}^n\lambda_i\v_i\v_i^{\top}$, then its projection onto $\mS_n$ is rank-$r$ if and only if $\sum_{i=1}^r\lambda_i \geq 1+ r\cdot\lambda_{r+1}$.}
\end{observation}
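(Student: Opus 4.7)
The claim is essentially a direct corollary of Lemma \ref{lem:spectrahedronProj}, so the plan is to unpack that lemma with the eigenvalues in sorted order and read off what is actually needed to evaluate the formula. Assume without loss of generality that the eigen-decomposition of $\M$ is written so that $\lambda_1 \geq \lambda_2 \geq \cdots \geq \lambda_n$, and let $\lambda$ denote the (unique) threshold parameter guaranteed by Lemma \ref{lem:spectrahedronProj}. By the lemma, $\Pi_{\mS_n}[\M] = \sum_{i=1}^n \max\{0,\lambda_i - \lambda\}\v_i\v_i^{\top}$, so the rank of the projection equals the number of indices $i$ for which $\lambda_i > \lambda$. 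If this rank is $r$, then since the eigenvalues are sorted, the active indices must be exactly $1,\ldots,r$, i.e.\ $\lambda_r > \lambda \geq \lambda_{r+1}$.

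Next, I would use the normalization constraint $\sum_{i=1}^n\max\{0,\lambda_i-\lambda\} = 1$ to eliminate the unknown threshold: it collapses to $\sum_{i=1}^r(\lambda_i-\lambda)=1$, giving the closed form $\lambda = \bigl(\sum_{i=1}^r\lambda_i - 1\bigr)/r$. Substituting back, $\Pi_{\mS_n}[\M] = \sum_{i=1}^r(\lambda_i-\lambda)\v_i\v_i^{\top}$. Both the threshold and the projection are now expressed solely in terms of $\lambda_1,\ldots,\lambda_r$ and $\v_1,\ldots,\v_r$, so a rank-$r$ SVD of $\M$ suffices to form $\Pi_{\mS_n}[\M]$ exactly, which is the content of the observation.

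For the footnote's characterization, I would convert the inequality $\lambda \geq \lambda_{r+1}$ obtained above, using the closed form $\lambda = (\sum_{i=1}^r\lambda_i - 1)/r$, into the equivalent algebraic condition $\sum_{i=1}^r\lambda_i \geq 1 + r\cdot\lambda_{r+1}$. The forward direction (rank $r$ implies the inequality) is immediate from $\lambda \geq \lambda_{r+1}$. For the reverse direction one sets $\lambda^\star := (\sum_{i=1}^r\lambda_i - 1)/r$ and checks that it is a valid threshold: the inequality directly gives $\lambda_i - \lambda^\star \leq 0$ for every $i > r$, while the complementary strict inequality $\lambda_r > \lambda^\star$ (which one may absorb into the ``rank equals $r$'' convention, otherwise the condition characterizes rank at most $r$) ensures that the top $r$ terms are active; uniqueness of the threshold in Lemma \ref{lem:spectrahedronProj} then identifies $\lambda^\star$ with the true $\lambda$ and certifies rank $r$.

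There is no real obstacle here—the whole argument is a sorting-and-bookkeeping exercise on top of Lemma \ref{lem:spectrahedronProj}. The only mildly delicate point is the backward direction of the footnote's ``if and only if,'' where a strict-versus-weak inequality at index $r$ determines whether the rank is exactly $r$ or at most $r$; this distinction does not affect the computational conclusion, since in either case only the top-$r$ SVD of $\M$ is needed.
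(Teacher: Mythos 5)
Your proposal is correct, and it takes exactly the route the paper intends: the paper offers no explicit proof of Observation~\ref{observ:lowRank}, treating it as immediate from Lemma~\ref{lem:spectrahedronProj}, and your write-up (solving for the threshold $\lambda=(\sum_{i=1}^r\lambda_i-1)/r$ from the active set and translating $\lambda\geq\lambda_{r+1}$ into the footnote's inequality) is precisely the bookkeeping the paper leaves implicit. Your caveat about the strict-versus-weak inequality at index $r$ (rank exactly $r$ versus rank at most $r$) is also the right one to flag, and it indeed does not affect the computational conclusion.
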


This observation implies that when the projected matrix is low-rank, the projection can be computed via fast iterative methods (such as power iterations or the faster Lanczos method) with runtime that is proportional to only $r\cdot\nnz(\M)$ (where $\nnz(\cdot)$ denotes the number of non-zero entries), as opposed to $n^3$ required for a full-rank SVD.

Let us denote by $\mX^*$ the set of optimal solutions to Problem \eqref{eq:optProb}.  Our main result in this paper is that given some optimal solution $\X^*\in\mX^*$ with $\rank(\X^*)=r$, under a simple and natural condition on the eigenvalues of the gradient vector $\nabla{}f(\X^*)$, which we present next, the standard projected stochastic gradient method with mini-batches (see Algorithm \ref{alg:sgd}), when initialized close enough to $\X^*$, will converge with constant probability to the optimal value of Problem \eqref{eq:optProb} - $f^*$, while requiring on each iteration a single SVD computation of rank at most $r$ to compute the projection. 

\begin{assumption}\label{ass:gap}
We say an optimal solution $\X^*\in\mX^*$ of rank $r$ satisfies the eigen-gap assumption if $\lambda_{n-r}(\nabla{}f(\X^*))-\lambda_n(\nabla{}f(\X^*))  > 0$.
\end{assumption}

Importantly, the eigen-gap assumption, even without assuming explicitly that $\X^*$ is of rank $r$,  is a sufficient condition for $\X^*$ to have rank at most $r$. This follows from the following lemma (see Lemma 7 in \cite{Garber19}). Thus,   the additional requirement that $\X^*$ is of rank exactly $r$ could be understood as a non-degeneracy requirement.

\begin{lemma}\label{lem:eigsOfOptGradSDP}
Let $\X^*\in\mX^*$ be any optimal solution and write its eigendecomposition as $\X^* = \sum_{i=1}^r\lambda_i\v_i\v_i^{\top}$. Then, the gradient vector $\nabla{}f(\X^*)$ admits an eigendecomposition such that the set of vectors $\{\v_i\}_{i=1}^r$ is a set of top eigen-vectors of $(-\nabla{}f(\X^*))$ which corresponds to the eigenvalue $\lambda_1(-\nabla{}f(\X^*)) = -\lambda_n(\nabla{}f(\X^*))$.
\end{lemma}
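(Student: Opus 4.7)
The plan is to leverage the first-order optimality condition for convex optimization on $\mS_n$, which states that $\X^*\in\mX^*$ is optimal if and only if $\langle\nabla f(\X^*),\X-\X^*\rangle\geq 0$ for every $\X\in\mS_n$. The spectrahedron contains every rank-one projector $\v\v^\top$ with $\Vert\v\Vert=1$, so I would first substitute $\X=\v\v^\top$ into the optimality inequality to obtain $\v^\top\nabla f(\X^*)\v\geq\langle\nabla f(\X^*),\X^*\rangle$ for every unit vector $\v$. Minimizing the left-hand side over the unit sphere via the Rayleigh–Ritz characterization gives $\lambda_n(\nabla f(\X^*))\geq\langle\nabla f(\X^*),\X^*\rangle$.

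Next I would expand the right-hand side using the eigendecomposition $\X^*=\sum_{i=1}^r\lambda_i\v_i\v_i^\top$ (where $\lambda_i>0$ and $\sum_i\lambda_i=\trace(\X^*)=1$) to write
\[
\langle\nabla f(\X^*),\X^*\rangle=\sum_{i=1}^r\lambda_i\,\v_i^\top\nabla f(\X^*)\v_i\;\geq\;\lambda_n(\nabla f(\X^*))\sum_{i=1}^r\lambda_i=\lambda_n(\nabla f(\X^*)),
\]
where the inequality is again by Rayleigh–Ritz applied to each term. Chaining the two bounds forces equality throughout, hence $\v_i^\top\nabla f(\X^*)\v_i=\lambda_n(\nabla f(\X^*))$ for every $i\in[r]$ (this is the step that uses positivity of all $\lambda_i$ and $\sum\lambda_i=1$: a convex combination of quantities each at least $\lambda_n$ equals $\lambda_n$ only when each quantity does).

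From the equality case of Rayleigh–Ritz I would then conclude that each $\v_i$ lies in the eigenspace of $\nabla f(\X^*)$ associated with the smallest eigenvalue $\lambda_n(\nabla f(\X^*))$. Since the $\v_i$ are orthonormal, I can complete them to an orthonormal eigenbasis of $\nabla f(\X^*)$ that begins with $\v_1,\dots,\v_r$, all assigned to eigenvalue $\lambda_n(\nabla f(\X^*))$; flipping signs turns these into top eigenvectors of $-\nabla f(\X^*)$ with eigenvalue $\lambda_1(-\nabla f(\X^*))=-\lambda_n(\nabla f(\X^*))$, which is exactly the claim. I do not anticipate a serious obstacle: the one subtle point is invoking the equality case of Rayleigh–Ritz to upgrade ``$\v_i$ attains the minimum of the Rayleigh quotient'' into ``$\v_i$ is an eigenvector of the smallest eigenvalue,'' but this is a standard fact about symmetric matrices.
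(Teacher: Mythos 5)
Your proof is correct and complete: the first-order optimality condition over $\mS_n$ tested against rank-one projectors, combined with the equality case of Rayleigh--Ritz, is exactly the standard argument establishing this fact. The paper itself does not reprove this lemma but defers to Lemma 7 of \cite{Garber19}, which rests on the same complementary-slackness reasoning, so your derivation is a faithful self-contained substitute for the citation.
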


In order to better motivate Assumption \ref{ass:gap} we bring the following lemma which suggests that this condition is required for the robustness of low-rank optimal solutions. The lemma shows that when the eigengap assumption does not hold, performing a standard projected gradient step from this optimal point w.r.t. to an arbitrarily small perturbation of the optimization problem, will result in a higher-rank matrix. Here we recall the first-order optimality condition $\Pi_{\mS_n}[\X^*-\beta^{-1}\nabla{}f(\X^*)] = \X^*$.

The lemma is a simple adaptation of Lemma 3 in \cite{Garber19} (which considers optimization over trace-norm balls). A proof is given in the appendix for completeness.

\begin{lemma}\label{lem:robustRank}
Let $f:\mbS^n\rightarrow\reals$ be $\beta$-smooth and convex. Let $\X^*\in\mS_n$ be an optimal solution of rank $r$ to the optimization problem $\min_{\X\in\mS_n}f(\X)$. Let $\mu_1,\dots,\mu_n$ denote the eigenvalues of $\nabla{}f(\X^*)$ in non-increasing order. Then, $\mu_{n-r}=\mu_{n}$ if and only if for any arbitrarily small $\zeta>0$ it holds that
\begin{eqnarray*}
\rank(\Pi_{(1+\zeta)\mS_n}[\X^*-\beta^{-1}\nabla{}f(\X^*)]) > r,
\end{eqnarray*}
where $(1+\zeta)\mS_n = \{(1+\zeta)\X~|~\X\in\mS_n\}$, and $\Pi_{(1+\zeta)\mS_n}[\cdot]$ denotes the Euclidean projection onto the convex set $(1+\zeta)\mS_n$.
\end{lemma}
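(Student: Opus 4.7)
The plan is to pin down the eigendecomposition of $\M:=\X^*-\beta^{-1}\nabla f(\X^*)$, apply Lemma~\ref{lem:spectrahedronProj} to identify the threshold parameter that produces $\X^*$ from $\M$, and then track how that threshold moves when the trace budget is increased from $1$ to $1+\zeta$.

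First, I would use Lemma~\ref{lem:eigsOfOptGradSDP} to obtain a simultaneous eigenbasis $\{\v_i\}_{i=1}^n$. Writing $\X^*=\sum_{i=1}^r\lambda_i\v_i\v_i^\top$ with $\lambda_i>0$, the lemma gives $\nabla f(\X^*)\v_i=\mu_n\v_i$ for $i=1,\dots,r$, and symmetry of $\nabla f(\X^*)$ lets me diagonalize on the orthogonal complement, yielding $\nabla f(\X^*)=\mu_n\sum_{i=1}^r\v_i\v_i^\top+\sum_{j=r+1}^n\tilde\mu_j\v_j\v_j^\top$, where each $\tilde\mu_j\geq\mu_n$ and $\{\tilde\mu_j\}_{j=r+1}^n$ is a reordering of $\{\mu_1,\dots,\mu_{n-r}\}$. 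Hence the eigenvalues of $\M$ are $\{\lambda_i-\beta^{-1}\mu_n\}_{i=1}^r$ (the top $r$, since $\lambda_i>0\geq\beta^{-1}(\mu_n-\tilde\mu_j)$) together with $\{-\beta^{-1}\tilde\mu_j\}_{j=r+1}^n$.

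Next, from the optimality condition $\Pi_{\mS_n}[\M]=\X^*$ and Lemma~\ref{lem:spectrahedronProj}, the threshold at $\zeta=0$ must be $\tau_0:=-\beta^{-1}\mu_n$, because only this value leaves the top-$r$ positive parts equal to $\lambda_i$ (summing to $\trace(\X^*)=1$) while suppressing the rest. The projection onto $(1+\zeta)\mS_n$ obeys the same thresholding formula with trace budget $1+\zeta$---either by a direct KKT derivation or via the identity $\Pi_{cK}[\M]=c\,\Pi_K[\M/c]$---so the new threshold $\tau(\zeta)$ is the unique solution of $\sum_{i=1}^n\max\{0,\nu_i-\tau(\zeta)\}=1+\zeta$, where $\nu_i$ ranges over the eigenvalues of $\M$ listed above.

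The two directions now split on the eigen-gap. If $\mu_{n-r}>\mu_n$, set $\delta:=\mu_{n-r}-\mu_n>0$; then every $\tilde\mu_j$ satisfies $-\beta^{-1}\tilde\mu_j<\tau_0$ by at least $\delta/\beta$, so lowering the threshold to $\tau_0-\zeta/r$ adds exactly $\zeta$ to the sum while still exceeding each $-\beta^{-1}\tilde\mu_j$ whenever $\zeta<r\delta/\beta$; for such $\zeta$ the projection retains rank $r$, proving the contrapositive of the ``only if'' direction. Conversely, if $\mu_{n-r}=\mu_n$ then some $\tilde\mu_j$ equals $\mu_n$, so the thresholding sum is already $1$ at $\tau=\tau_0$; monotonicity in $\tau$ forces $\tau(\zeta)<\tau_0$ for every $\zeta>0$, whence that $j$-th eigenvector contributes strictly positively and the rank is at least $r+1$. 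I do not expect any truly hard step: the simultaneous diagonalization and the scaled-spectrahedron projection formula are both routine, and the heart of the argument is the one-line comparison at $\tau_0=-\beta^{-1}\mu_n$ under the two cases.
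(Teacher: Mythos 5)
Your proposal is correct and follows essentially the same route as the paper's proof: both use the simultaneous eigenbasis from Lemma~\ref{lem:eigsOfOptGradSDP} to read off the spectrum of $\X^*-\beta^{-1}\nabla f(\X^*)$, apply the thresholding formula for projection onto $(1+\zeta)\mS_n$, and reduce the rank question to whether the threshold can stay at or above $-\beta^{-1}\mu_{n-r}$, concluding that the rank stays at $r$ exactly when $\zeta\leq \beta^{-1}r(\mu_{n-r}-\mu_n)$. Your explicit construction of the perturbed threshold $\tau_0-\zeta/r$ is a slightly more concrete packaging of the paper's inequality chain (and, incidentally, your factor $r\delta/\beta$ has the correct $\beta^{-1}$ scaling, where the paper's displayed bound contains a typo writing $\beta$ for $\beta^{-1}$).
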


We also refer the reader to \cite{Garber19} (Table 2) for an empirical evidence that Assumption \ref{ass:gap} seems to be quite practical for real-world datasets.


\begin{algorithm}
\caption{Projected Stochastic Gradient Descent with minibathces}
\label{alg:sgd}
\begin{algorithmic}[1]
\STATE input: initialization point $\X_1\in\mS_n$, batch-size $L$, time horizon $T$, sequence of step-sizes $\{\eta_t\}_{t\in[T-1]}$
\FOR{$t=1\dots{}T-1$}
\STATE $\widehat{\nabla}_t \gets \frac{1}{L}\sum_{i=1}^L\widehat{\nabla}_t^{(i)}$, where $\{\widehat{\nabla}_t^{(i)}\}_{i=1}^L$ is produced by $L$ calls to the stochastic oracle of $f(\cdot)$ with the input point $\X_t$
\STATE $\X_{t+1} \gets \Pi_{\mS_n}[\X_t - \eta_t\widehat{\nabla}_t]$
\ENDFOR
\STATE return solution $\bar{\X}$ according to one of the following options:
\begin{eqnarray*}
\bar{\X} \gets \X_{t_0}, ~ t_0\sim{}Uni\{1,\dots,T\} ~ \textrm{(\textbf{option I})} \quad \textrm{or} \quad
\bar{\X} \gets \frac{1}{T}\sum_{i=1}^T\X_i ~ \textrm{(\textbf{option II})} 
\end{eqnarray*}

\end{algorithmic}
\end{algorithm}

Formally, the main result of this paper is the proof of the following theorem.

\begin{theorem}\label{thm:main}
Let $\X^*\in\mX^*$ be an optimal solution of rank $r$ which satisfies Assumption \ref{ass:gap}. Consider running SGD  (Algorithm \ref{alg:sgd}) for $T$ iterations with a fixed step-size $\eta = \frac{R_0}{10G\sqrt{T}\log(8T)}$ and when the first iterate $\X_1$ satisfies $\rank(\X_1) \leq r$ and $\Vert{\X_1-\X^*}\Vert_F \leq R_0/2$, where 
\begin{align*}
R_0 := \frac{1}{8}\Big({r\beta + \frac{\sqrt{2r}B}{\lambda_r(\X^*)}}\Big)^{-1}\delta,\qquad 0 <\delta \leq \lambda_{n-r}(\nabla{}f(\X^*)) - \lambda_n(\nabla{}f(\X^*)),
\end{align*}
and with constant minibatch size $L$ satisfying $L\geq L_0 =O\Big({\max\{(\left({\frac{\sigma}{GR_0}}\right)^2,\frac{B^2r^2}{\delta^2}\log(nT)\}}\Big)$. Then, for any $T$ sufficiently large, it holds with probability at least $1/2$ that
\begin{enumerate}
\item
$f(\bar{\X}) - f^* = O\left({\frac{GR_0\log{T}}{\sqrt{T}}}\right)$,
\item
$\forall t\in[T-1]$: $\rank(\X_{t+1}) \leq r$. Moreover, if \textbf{option I} is used for the returned solution, then $\rank(\bar{\X}) \leq r$.
\end{enumerate}

\end{theorem}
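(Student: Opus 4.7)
My plan is to prove the theorem by strong induction on the iteration index $t$, maintaining the joint invariant that $\rank(\X_t) \leq r$ and $\|\X_t - \X^*\|_F \leq R_0$, conditioned on a few high-probability events. The proof cleanly decomposes into three pieces: a deterministic rank-preservation lemma, a matrix-concentration bound for the mini-batch, and a Freedman-type martingale control on the Frobenius distance to $\X^*$. The bulk of the work is in the rank-preservation lemma; the other two ingredients are standard.

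For rank preservation I would isolate the following deterministic claim: if $\rank(\X_t)\leq r$, $\|\X_t - \X^*\|_F \leq R_0$, and the batch error $\|\widehat{\nabla}_t - \nabla{}f(\X_t)\|$ is at most a small fraction of $\delta/r$, then $\rank(\X_{t+1}) \leq r$. By the footnote to Lemma~\ref{lem:spectrahedronProj} it suffices to verify $\sum_{i=1}^r\lambda_i(M) \geq 1 + r\,\lambda_{r+1}(M)$ for $M := \X_t - \eta\widehat{\nabla}_t$. Let $P^*$ denote the projector onto $V := \textrm{range}(\X^*)$; by Lemma~\ref{lem:eigsOfOptGradSDP}, $\nabla{}f(\X^*)$ equals $\mu_n\Id$ on $V$ while on $V^\perp$ its minimum eigenvalue is $\mu_{n-r} \geq \mu_n + \delta$. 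I would lower-bound $\sum_{i=1}^r\lambda_i(M) \geq \trace(P^* M P^*)$; this yields $1 - \eta r \mu_n$ minus two error terms: $\trace((I-P^*)\X_t(I-P^*))$, a PSD matrix of rank $\leq r$ whose trace is at most $\sqrt{r}\|\X_t-\X^*\|_F$, and $\eta\,\trace(P^*(\widehat{\nabla}_t - \nabla{}f(\X^*)))$, controlled jointly by smoothness $\beta\|\X_t-\X^*\|_F$ and the matrix-concentration event. For the upper bound on $\lambda_{r+1}(M)$ I exploit the fact that $\X_t$ vanishes on $V_t^\perp := \textrm{range}(\X_t)^\perp$, reducing the task to lower-bounding $\lambda_{\min}(P_{V_t^\perp}\widehat{\nabla}_t P_{V_t^\perp})$; a Davis--Kahan subspace-perturbation estimate, using the spectral gap $\lambda_r(\X^*)$ of $\X^*$, lets me transfer this from $V_t^\perp$ to $V^\perp$ at the cost of an additional term of order $B\|\X_t - \X^*\|_F/\lambda_r(\X^*)$. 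The two contributions $r\beta R_0$ and $\sqrt{2r}\,B R_0/\lambda_r(\X^*)$ appearing in the definition of $R_0$ are exactly these two dominant error budgets, each calibrated to consume at most a fraction of the $\eta r\delta$ slack built into the footnote condition.

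For the mini-batch noise I would apply a matrix Hoeffding/Bernstein inequality to $\frac{1}{L}\sum_{i=1}^L(\widehat{\nabla}_t^{(i)} - \nabla{}f(\X_t))$: with $L \geq C\,B^2 r^2 \log(nT)/\delta^2$ the spectral norm is at most $O(\delta/r)$ except on an event of probability $1/(nT^2)$, so a union bound gives the required control simultaneously at all $t$. For the Frobenius containment $\|\X_t - \X^*\|_F \leq R_0$ I start from the non-expansivity identity $\|\X_{t+1} - \X^*\|_F^2 \leq \|\X_t - \X^*\|_F^2 - 2\eta\langle \widehat{\nabla}_t,\X_t - \X^*\rangle + \eta^2\|\widehat{\nabla}_t\|_F^2$, split $\widehat{\nabla}_t$ into mean plus martingale difference, use convexity to discard $\langle \nabla{}f(\X_t),\X_t-\X^*\rangle \geq 0$, and apply Freedman's inequality to the martingale sum; the variance-reducing choice $L \geq (\sigma/(GR_0))^2$ together with $\eta = R_0/(10G\sqrt{T}\log(8T))$ makes the cumulative drift small enough that the running radius stays below $R_0$ uniformly in $t$ with probability at least $3/4$.

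Finally, the same SGD telescoping, averaged over $t$, produces $f(\bar{\X}) - f^* = O(GR_0 \log T/\sqrt{T})$ by convexity (for option II) or the identical-in-expectation argument for option I, with the logarithmic factor inherited from the Freedman step. Union bounding the three events (mini-batch concentration, Frobenius containment, regret bound) yields overall success probability at least $1/2$, and the rank claim for option I is immediate from per-iterate rank preservation. The main obstacle, as indicated above, is the perturbation bookkeeping in the rank-preservation step: the slack in the footnote condition is only of order $\eta r \delta$, while a naive Weyl bound on $\|M - (\X^* - \eta \nabla{}f(\X^*))\|$ is of order $R_0$; it is precisely the combination of the $\X_t$-vanishing trick on $V_t^\perp$ and the Davis--Kahan transfer that breaks this apparent mismatch and forces the specific form of $R_0$.
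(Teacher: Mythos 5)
Your overall architecture (a deterministic rank-preservation lemma, matrix concentration for the mini-batch, Freedman-type control of $\Vert{\X_t-\X^*}\Vert_F$, and a union bound) matches the paper's proof, and your concentration and martingale ingredients are essentially the paper's Matrix-Hoeffding step and Lemma \ref{lem:martingaleDistNew}. The gap is in the rank-preservation step, specifically in the lower bound $\sum_{i=1}^r\lambda_i(M)\geq\trace(P^*MP^*)$ with $P^*$ the projector onto $\mathrm{range}(\X^*)$. This choice produces the error term $\trace((I-P^*)\X_t(I-P^*))$, which is a property of $\X_t$ alone and carries no factor of $\eta$: it is generically of order $\Vert{\X_t-\X^*}\Vert_F^2/\lambda_r(\X^*)^2$ (the squared sine of the principal angle between the two ranges), hence up to order $R_0^2/\lambda_r(\X^*)^2$ with $R_0$ a fixed constant independent of $T$. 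But the total slack in the footnote condition $\sum_{i=1}^r\lambda_i(M)\geq 1+r\lambda_{r+1}(M)$ available to absorb all errors is only $\eta r\delta=\Theta\bigl({R_0 r\delta/(G\sqrt{T}\log T)}\bigr)$, which tends to $0$ as $T\to\infty$ while your zeroth-order loss does not. So for $T$ large --- exactly the regime of the theorem --- this bound cannot close. You correctly diagnose this $O(\eta)$-versus-$O(R_0)$ mismatch on the $\lambda_{r+1}(M)$ side and fix it there with the $V_t^\perp$ trick, but the same disease afflicts your $\sum_{i=1}^r\lambda_i(M)$ side and is left untreated.

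The fix is the one the paper uses in Lemma \ref{lem:goodProj}: test against the projector onto $\mathrm{range}(\X_t)$ rather than $\mathrm{range}(\X^*)$. Then $\trace(P_{V_t}\X_t P_{V_t})=\trace(\X_t)=1$ exactly, the zeroth-order loss disappears, and the only remaining comparison is between $\eta\,\trace(P_{V_t}\widehat{\nabla}_t P_{V_t})$ and $\eta r\mu_n$ --- which is where Davis--Kahan (to pass from $P_{V_t}$ to $P^*$ at cost $O(B\Vert{\X_t-\X^*}\Vert_F/\lambda_r(\X^*))$), smoothness, and the spectral noise bound enter, all safely multiplied by $\eta$. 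With that substitution the rest of your plan goes through and recovers the stated $R_0$; the remaining components (Freedman containment, Matrix Hoeffding with a union bound over $t$, regret plus Markov for the function-value claim) are as in the paper.
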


Thus, Theorem \ref{thm:main}, together with Observation \ref{observ:lowRank}, imply that with constant probability, all the steps of SGD can be computed via a rank-$r$ SVD.

\begin{corollary}[sample complexity]\label{corr:samplecomplexity}
The overall sample complexity to achieve $f(\bar{\X})-f^*\leq \epsilon$ with probability at least $1/2$, when initializing from a ``warm-start'', is upper-bounded by $\tilde{O}\left({\epsilon^{-2}\max\{\sigma^2,\lambda_r^2(\X^*)rG^2\}}\right)$\footnote{Throughout this paper we use the notation $\tilde{O}(\cdot)$ or $\tilde{\Theta}(\cdot)$ to suppress poly-logarithmic factors.} (note that $\lambda_r(\X^*) \leq 1/r$).
\end{corollary}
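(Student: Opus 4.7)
The sample complexity in Algorithm~\ref{alg:sgd} is the product $T\cdot L$ of the number of iterations and the minibatch size, since each iteration makes exactly $L$ calls to the stochastic first-order oracle. My plan is to invoke Theorem~\ref{thm:main} as a black box, solve for the minimal $T$ that forces the suboptimality gap to fall below $\epsilon$, and then multiply by the prescribed $L_0$, separating the two terms inside the $\max$ and simplifying each.

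First I would set the suboptimality bound $O(GR_0\log T/\sqrt{T})\le\epsilon$. Solving for $T$ (and absorbing the $\log^2 T$ factor into $\tilde{\Theta}$) gives $T=\tilde{\Theta}(G^2R_0^2/\epsilon^2)$. Next, I would plug in the two branches of $L_0=O(\max\{(\sigma/(GR_0))^2,\;B^2r^2\log(nT)/\delta^2\})$ separately. The variance branch contributes $T\cdot\sigma^2/(GR_0)^2=\tilde{O}(\sigma^2/\epsilon^2)$, with the $R_0$-dependence cancelling cleanly against the $R_0^2$ factor in $T$. The spectral-norm branch contributes $T\cdot B^2r^2\log(nT)/\delta^2=\tilde{O}(G^2R_0^2B^2r^2/(\epsilon^2\delta^2))$.

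The one step requiring care is showing that the second contribution has no residual $\delta$-dependence. For this I would use the explicit form of $R_0$ from Theorem~\ref{thm:main}: since $R_0=\tfrac{1}{8}\bigl(r\beta+\sqrt{2r}B/\lambda_r(\X^*)\bigr)^{-1}\delta\le \delta\lambda_r(\X^*)/(8\sqrt{2r}\,B)$, squaring yields
\[
\frac{R_0^2B^2}{\delta^2}\;\le\;\frac{\lambda_r^2(\X^*)}{128\,r},
\]
so the second contribution is at most $\tilde{O}(G^2\lambda_r^2(\X^*)\,r/\epsilon^2)$. This is the key simplification: the eigen-gap parameter $\delta$ in the minibatch-size denominator is precisely absorbed by the $\delta^2$ hidden in $R_0^2$, together with the amplification by $B^2r^2$.

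Finally, combining the two bounds via the $\max$ inherited from $L_0$ gives
\[
T\cdot L\;=\;\tilde{O}\!\left(\epsilon^{-2}\max\{\sigma^2,\;\lambda_r^2(\X^*)\,rG^2\}\right),
\]
which is the desired estimate. No genuine obstacle appears; the whole argument is essentially arithmetic combined with the observation about $\delta$-cancellation. The only subtlety is ensuring that the chosen $T$ also satisfies the "sufficiently large $T$" hypothesis of Theorem~\ref{thm:main}, which can be verified for any fixed $\delta\in(0,\lambda_{n-r}(\nabla f(\X^*))-\lambda_n(\nabla f(\X^*))]$ as soon as $\epsilon$ is below a problem-dependent constant, and therefore does not affect the asymptotic $\tilde{O}$ bound.
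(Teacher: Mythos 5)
Your proposal is correct and follows essentially the same route as the paper's own proof: sample complexity $= T\cdot L$, solving $T=\tilde{\Theta}(G^2R_0^2/\epsilon^2)$ from the rate in Theorem \ref{thm:main}, and then observing that $R_0^2$ cancels in the variance branch while $R_0 \leq \delta\lambda_r(\X^*)/(8\sqrt{2r}B)$ absorbs the $\delta^{-2}$ in the spectral branch, yielding $\tilde{O}\left({\epsilon^{-2}\max\{\sigma^2,\lambda_r^2(\X^*)rG^2\}}\right)$. Your closing remark about verifying the ``$T$ sufficiently large'' hypothesis is a minor point of extra care that the paper leaves implicit.
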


The proof is given in the appendix. We note this sample complexity is nearly optimal (up to a logarithmic factor) in $\epsilon$ and optimal in $\sigma,G$ (see for instance \cite{Bubeck15}). Most importantly, it is independent of the eigen-gap $\delta$.\footnote{Naturally, the sample complexity to obtain the required ``warm-start" initialization will depend on $\delta$, but will be independent of the overall target accuracy $\epsilon$.}

\begin{remark}
It is quite important to note that while verifying the validity of Assumption \ref{ass:gap}, or the ''warm start" condition, or even setting the step-size in Theorem \ref{thm:main} correctly, can be quite difficult in practice, from a practical point of view, it is mainly important that the low-rank-SVD-based projection is indeed the correct Euclidean projection. This however, could be easily verified in each step $t$ of the algorithm: if instead of computing a rank-$r$ SVD of the point to project $\X_t-\eta_t\widehat{\nabla}_t$, we compute a rank-$(r+1)$ SVD, we can easily verify (using the condition on the thresholding parameter $\lambda$ in Lemma \ref{lem:spectrahedronProj}, see Footnote 1), if the correct projection is indeed of rank at most $r$, and hence verify that the algorithm indeed converges correctly.
\end{remark}

\subsection{Related work}
Our work is primarily motivated by the very recent work \cite{Garber19}, which considered Problem \eqref{eq:optProb} in a purely deterministic setting, i.e., when exact gradients of $f(\cdot)$ are available. In that work it is shown that, under Assumption \ref{ass:gap}, standard projected gradient methods, when initialized with a ``warm-start" point, converge with their original convergence guarantees to an optimal solution using only low-rank SVD to compute the projection. However, these results are not directly extendable to the stochastic setting for two reasons. First, the ``warm-start" requirement in \cite{Garber19} requires that the distance to an optimal solution is proportional to the step-size used. While this makes sense in the deterministic setting, since the typical step-size for projected-gradient methods is just $1/\beta$, for SGD, the step-size (e.g., when chosen to be fixed) is proportional to the target accuracy $\epsilon$, which imposes an unrealistic initialization requirement (in particular, given that the function is Lipschitz, such a condition already implies that the initial point satisfies $f(\X_1)-f^* = O(\epsilon)$). Therefore, our main technical contribution is to provide an alternative analysis to the one used in \cite{Garber19}, in which the required initial distance to an optimal solution is independent of the step-size.

Second, since the analysis of \cite{Garber19} (as the one in this work) only applies in a certain ball around an optimal solution, it relies on the property that the projected gradient method does not increase the distance to the optimal set from one iteration to the next. This property does not hold anymore for SGD, and here we introduce a martingale argument to show that with high probability all the iterates indeed stay within the relevant ball.

For specific low-rank matrix recovery problems, the works \cite{de2015global, Jin16, ge2016matrix, bhojanapalli2016global}  yield global convergence guarantees for non-convex SGD which forces the low-rank constraint by explicitly factorizing the matrix variable as the product of two rank-$r$ matrices. However, these only hold under very specific and quite strong statistical assumptions on the data. On the contrary, in this work we do not impose any statistical generative model on the data.

Finally, we note that works that analyze non-convex methods without relying on strong statistical models, such as \cite{Dropping16} (though they only consider the \textit{deterministic} gradient descent method), also require ``warm-start'' initialization which is qualitatively similar to ours, e.g., relies on the ratio between smallest and largest singular values of the optimal solution (see Theorem \ref{thm:main} above).

\section{Analysis}
The proof of Theorem \ref{thm:main} follows from combining the standard convergence analysis of SGD with two main lemmas. Lemma \ref{lem:goodProj}, which is the main technical novelty we introduce in this paper, and believe may be of independent interest, establishes (informally) that at any step $t$ of Algorithm \ref{alg:sgd}, if $\X_t$ is sufficiently close to an optimal solution $\X^*$ which satisfies the gap assumption (Assumption \ref{ass:gap}), and the stochastic gradient is not too noisy, then $\X_{t+1}$ is low-rank (and hence can be computed, given $\X_t,\widehat{\nabla}_t$, using only a low-rank SVD). Lemma \ref{lem:martingaleDistNew} then uses a martingale concentration argument to establish that, if $\X_1$ is sufficiently close to some optimal solution $\X^*$, then with high probability, all following iterates are also sufficiently close. Combining these two lemmas ensures that with high probability, the projection onto $\mS_n$ at each step of Algorithm \ref{alg:sgd} can be computed using only a low-rank SVD computation.

Throughout this work we let the operation $\A\bullet\B$ denote the standard inner product for any two matrices $\A,\B\in\reals^{m\times n}$, i.e., $\A\bullet\B = \trace(\A\B^{\top})$.

\begin{lemma}\label{lem:goodProj}
Let $\X^*\in\mX^*$ be of rank $r$,  and let $\mu_1\dots\mu_n$ denote the eigenvalues of $\nabla{}f(\X^*)$ in non-increasing order. Let $\X\in\mS_n$ be a matrix such that $\rank(\X) \leq r$, and suppose that
\begin{eqnarray}\label{eq:convRadius}
\Vert{\X-\X^*}\Vert_F \leq \frac{1}{4}\Big({r\beta + \frac{\sqrt{2r}{B}}{\lambda_r(\X^*)}}\Big)^{-1}\left({\delta - 4r\xi}\right), 
\end{eqnarray}
where $\delta := \mu_{n-r}-\mu_n$. Finally, let $\widetilde{\nabla}$ be a matrix such that $\Vert{\widetilde{\nabla}-\nabla{}f(\X)}\Vert \leq \xi$, $\Vert{\widetilde{\nabla}}\Vert\leq B$. Then, for any step-size $\eta >0$ it holds that $\rank\left({\Pi_{\mS_n}[\X-\eta\widetilde{\nabla}]}\right) \leq r$.
\end{lemma}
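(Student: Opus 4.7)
The plan is to invoke the sufficient condition from Footnote 1 applied to $\M := \X - \eta\widetilde{\nabla}$: $\rank(\Pi_{\mS_n}[\M]) \leq r$ whenever $\sum_{i=1}^r \lambda_i(\M) \geq 1 + r\lambda_{r+1}(\M)$. Let $\U_{\X} \in \reals^{n\times r}$ have orthonormal columns spanning any $r$-dimensional subspace containing $\textrm{range}(\X)$, and let $\U_\X^\perp$ be an orthonormal basis of its orthogonal complement. The assumption $\rank(\X) \leq r$ gives the \emph{exact} identities $\trace(\U_\X^\top \X \U_\X) = \trace(\X) = 1$ and $\U_\X^{\perp\top}\X\U_\X^\perp = 0$, so Ky Fan's inequality and the Courant--Fischer min-max principle yield
\begin{align*}
\sum_{i=1}^r \lambda_i(\M) - r\lambda_{r+1}(\M) \;\geq\; \trace(\U_\X^\top \M \U_\X) - r\lambda_{\max}(\U_\X^{\perp\top}\M \U_\X^\perp) \;=\; 1 + \eta\,\Psi,
\end{align*}
with $\Psi := r\lambda_{\min}(\U_\X^{\perp\top}\widetilde{\nabla} \U_\X^\perp) - \trace(\U_\X^\top \widetilde{\nabla} \U_\X)$. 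It therefore suffices to prove $\Psi \geq 0$, a condition \emph{independent of $\eta$}; this step-size-free structure is the critical new ingredient, contrasting with \cite{Garber19} whose warm-start radius scales with $\eta$.

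I would next compute the ideal value $\Psi^*$ at $\X = \X^*$, $\U_\X = \U_{\X^*}$, $\widetilde{\nabla} = \nabla f(\X^*)$: Lemma \ref{lem:eigsOfOptGradSDP} implies $\U_{\X^*}^\top \nabla f(\X^*) \U_{\X^*} = \mu_n \I_r$ and that $\U_{\X^*}^{\perp\top}\nabla f(\X^*)\U_{\X^*}^\perp$ has eigenvalues $\mu_1,\dots,\mu_{n-r}$, giving $\Psi^* = r(\mu_{n-r}-\mu_n) = r\delta$. Thus proving $|\Psi - \Psi^*| \leq r\delta$ under \eqref{eq:convRadius} closes the argument. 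The perturbation splits into a gradient-error part, controlled by $\|\widetilde{\nabla} - \nabla f(\X^*)\| \leq \xi + \beta\|\X-\X^*\|_F$ (triangle inequality plus $\beta$-smoothness), and a subspace-change part $\U_{\X^*} \to \U_\X$ controlled by Davis--Kahan: since $\lambda_{r+1}(\X) = 0$ and $\lambda_r(\X^*) > 0$ form a clean eigengap, $\|(\I - \U_{\X^*}\U_{\X^*}^\top)\U_\X\U_\X^\top\|_F \leq \|\X-\X^*\|_F/\lambda_r(\X^*)$. For the trace contribution I would decompose
\begin{align*}
\trace(\U_\X^\top \widetilde{\nabla} \U_\X) - r\mu_n = \trace\!\bigl((\widetilde{\nabla}-\nabla f(\X^*))\U_\X\U_\X^\top\bigr) + \trace\!\bigl(\nabla f(\X^*)(\U_\X\U_\X^\top - \U_{\X^*}\U_{\X^*}^\top)\bigr),
\end{align*}
bound the first summand by $r\|\widetilde{\nabla}-\nabla f(\X^*)\|$ via $\|\U_\X\U_\X^\top\|_* = r$, and handle the second via the key trick: subtract $\mu_n\I$ (invisible because both projections have trace $r$) and use the annihilation $(\nabla f(\X^*)-\mu_n\I)\U_{\X^*}=0$ to rewrite it as $\trace\!\bigl((\nabla f(\X^*)-\mu_n\I)(\I-\U_{\X^*}\U_{\X^*}^\top)\U_\X\U_\X^\top\bigr)$; the rightmost factor has rank $\leq r$, so H\"older plus $\|\cdot\|_* \leq \sqrt{r}\|\cdot\|_F$ plus Davis--Kahan yields $O(B\sqrt{r}\|\X-\X^*\|_F/\lambda_r(\X^*))$, matching the $\sqrt{2r}B/\lambda_r(\X^*)$ factor in \eqref{eq:convRadius}. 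The $\lambda_{\min}$ piece is treated in parallel: a Weyl step absorbs the gradient error, while $x^\top(\nabla f(\X^*)-\mu_n\I)x \geq \delta(1-\|\U_{\X^*}\U_{\X^*}^\top x\|^2)$ for unit $x \perp \U_\X$ together with Davis--Kahan contributes only a higher-order $O(\delta\|\X-\X^*\|_F^2/\lambda_r^2(\X^*))$ term absorbed by the slack.

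The main obstacle is the subspace-change trace $\trace(\nabla f(\X^*)(\U_\X\U_\X^\top - \U_{\X^*}\U_{\X^*}^\top))$: the naive H\"older bound $\|\nabla f(\X^*)\|\cdot\|\U_\X\U_\X^\top - \U_{\X^*}\U_{\X^*}^\top\|_*$ is only $O(Br)$ and carries no dependence on $\|\X-\X^*\|_F$, hence useless. Subtracting $\mu_n\I$ and using the annihilation $(\nabla f(\X^*)-\mu_n\I)\U_{\X^*} = 0$ reduces the effective rank of the relevant perturbation from $2r$ to $r$, which is precisely what permits Davis--Kahan to inject the needed linear dependence on $\|\X-\X^*\|_F$. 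Assembling all bounds under \eqref{eq:convRadius} delivers $|\Psi - \Psi^*| \leq r\delta$, hence $\Psi \geq 0$ and $\rank(\Pi_{\mS_n}[\M]) \leq r$.
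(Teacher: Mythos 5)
Your proposal is correct in structure and reaches the lemma by a route that is a close cousin of, but not identical to, the paper's. Both arguments hinge on the same two ideas: (i) the criterion $\sum_{i=1}^r\lambda_i(\M)\geq 1+r\lambda_{r+1}(\M)$ from Footnote 1 becomes independent of $\eta$ precisely because $\rank(\X)\leq r$ gives the exact identities $\trace(\U_\X^\top\X\U_\X)=\trace(\X)=1$ and $\lambda_{r+1}(\X)=0$; and (ii) the resulting $\eta$-free condition is verified by perturbing around $\X^*$ via Lemma \ref{lem:eigsOfOptGradSDP}, $\beta$-smoothness, the oracle error $\xi$, and Davis--Kahan. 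Where you diverge: the paper controls $\lambda_{r+1}(\Y)$ by writing it as $\sum_{i=1}^{r+1}\lambda_i(\Y)-\sum_{j=1}^{r}\lambda_j(\Y)$ and applying Ky Fan, which yields the subspace-free sufficient condition $-\V\V^{\top}\bullet\widetilde{\nabla}\geq-\tfrac{r}{r+1}\sum_{i=1}^{r+1}\lambda_{n-i+1}(\widetilde{\nabla})$; its right-hand side is then handled by Ky Fan plus a spectral-norm bound on $\widetilde{\nabla}-\nabla f(\X^*)$ alone, with no geometry of $\textrm{range}(\X)$ entering. You instead bound $\lambda_{r+1}(\M)\leq\lambda_{\max}(\U_\X^{\perp\top}\M\U_\X^{\perp})$ by Courant--Fischer, which is cleaner at that step but forces you to additionally control $\lambda_{\min}(\U_\X^{\perp\top}\nabla f(\X^*)\U_\X^{\perp})$, costing a second Davis--Kahan argument (your quadratic-form estimate $x^{\top}(\nabla f(\X^*)-\mu_n\I)x\geq\delta(1-\Vert{\U_{\X^*}^{\top}x}\Vert^2)$) that the paper's route avoids entirely.

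Two caveats. First, the ``main obstacle'' you identify is not actually an obstacle: the paper bounds $\trace(\nabla f(\X^*)(\U_\X\U_\X^{\top}-\U_{\X^*}\U_{\X^*}^{\top}))$ by the direct H\"older estimate combined with $\Vert{\U_\X\U_\X^{\top}-\U_{\X^*}\U_{\X^*}^{\top}}\Vert_*\leq\sqrt{2r}\,\Vert{\U_\X\U_\X^{\top}-\U_{\X^*}\U_{\X^*}^{\top}}\Vert_F$ and Davis--Kahan, which already injects the linear dependence on $\Vert{\X-\X^*}\Vert_F$; your $\mu_n\I$-subtraction trick is valid but unnecessary. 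Second, you never carry out the constant bookkeeping, and your route pays an extra error term (the $\lambda_{\min}$ perturbation) that the paper's does not; a rough tally suggests the budget $|\Psi-\Psi^*|\leq r\delta$ can be exceeded by a small constant factor when $r=1$ and $\xi$ is near its maximal admissible value $\delta/(4r)$, so you should expect either to tighten the individual estimates or to obtain the lemma with slightly worse absolute constants in \eqref{eq:convRadius} (which would still serve the downstream use in Theorem \ref{thm:main}). Neither point is a fatal gap, but both deserve attention in a full write-up.
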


\begin{proof}
Let us denote $\Y = \X-\eta\widetilde{\nabla}$. From Lemma \ref{lem:spectrahedronProj} it follows that a sufficient condition so that $\rank\left({\Pi_{\mS_n}[\X-\eta\widetilde{\nabla}]}\right) \leq r$, is 
$\sum_{i=1}^{r}\lambda_i(\Y) \geq 1 + r\cdot\lambda_{r+1}(\Y)$
(since then the thresholding parameter $\lambda$ in Lemma \ref{lem:spectrahedronProj} must satisfy $\lambda \geq \lambda_{r+1}(\Y)$).

Let $\X = \V\Lambda\V^{\top}$ denote the eigen-decomposition of $\X$. In case $\rank(\X) < r$, we extend this decomposition to have rank=$r$ by adding additional zero eigenvalues and corresponding eigenvectors, so $\V\in\reals^{d\times r}$. It holds that
{\small
\begin{align*}
\sum_{i=1}^{r}\lambda_i(\Y) \geq \V\V^{\top}\bullet\Y = \V\V^{\top}\bullet(\X-\eta\widetilde{\nabla})  =\trace(\X) - \eta\V\V^{\top}\bullet\widetilde{\nabla} = 1 - \eta\V\V^{\top}\bullet\widetilde{\nabla}.
\end{align*}}

Using the above inequality we also have
{\small
\begin{align*}
\lambda_{r+1}(\Y) &= \sum_{i=1}^{r+1}\lambda_i(\Y) - \sum_{j=1}^r\lambda_j(\Y) \leq \sum_{i=1}^{r+1}\lambda_i(\Y) - \left({1 - \eta\V\V^{\top}\bullet\widetilde{\nabla}}\right) \\
&\underset{(a)}{\leq}  \sum_{i=1}^{r+1}\left({\lambda_i(\X) + \lambda_i(-\eta\widetilde{\nabla})}\right) - \left({1 - \eta\V\V^{\top}\bullet\widetilde{\nabla}}\right) \\
&\underset{(b)}{=} 1 + \sum_{i=1}^{r+1}\lambda_i(-\eta\widetilde{\nabla}) - \left({1 - \eta\V\V^{\top}\bullet\widetilde{\nabla}}\right) = \eta\V\V^{\top}\bullet\widetilde{\nabla} -\eta\sum_{i=1}^{r+1}\lambda_{n-i+1}(\widetilde{\nabla}),
\end{align*}}
where (a) follows from Ky Fan's eigenvalue inequality, and (b) follows since $\sum_{i=1}^{r+1}\lambda_i(\X) =\sum_{i=1}^{\rank(\X)}\lambda_i(\X)= \trace(\X)=1$.
 Thus, we arrive at the following sufficient condition so that $\rank\left({\Pi_{\mS_n}[\X-\eta\widetilde{\nabla}]}\right) \leq r$: 
{\small
\begin{eqnarray*}
 1 - \eta\V\V^{\top}\bullet\widetilde{\nabla} \geq 1+ r\left({\eta\V\V^{\top}\bullet\widetilde{\nabla} -\eta\sum_{i=1}^{r+1}\lambda_{n-i+1}(\widetilde{\nabla})}\right),
\end{eqnarray*}}

which boils down to the sufficient condition
{\small
\begin{eqnarray}\label{eq:lem:proj:1}
-\V\V^{\top}\bullet\widetilde{\nabla} \geq -\frac{r}{r+1}\sum_{i=1}^{r+1}\lambda_{n-i+1}(\widetilde{\nabla}).
\end{eqnarray}}

Let $\V^*\Lambda^*\V^{*\top}$ denote the eigen-decomposition of $\X^*$ and recall $\rank(\X^*) = r$. Then,
{\small
\begin{align*}
-\V\V^{\top}\bullet\widetilde{\nabla} &\geq -\V^*\V^{*\top}\bullet\widetilde{\nabla} - \Vert{\V\V^{\top}-\V^*\V^{*\top}}\Vert_*\Vert{\widetilde{\nabla}}\Vert \\
&\geq -\V^*\V^{*\top}\bullet\nabla{}f(\X^*) - \Vert{\V^*\V^{*\top}}\Vert_*\cdot\Vert{\widetilde{\nabla}-\nabla{}f(\X^*)}\Vert\\
&- \Vert{\V\V^{\top}-\V^*\V^{*\top}}\Vert_*\cdot\Vert{\widetilde{\nabla}}\Vert\\
&\geq -\V^*\V^{*\top}\bullet\nabla{}f(\X^*) - r\Vert{\widetilde{\nabla}-\nabla{}f(\X^*)}\Vert - \sqrt{2r}\Vert{\widetilde{\nabla}}\Vert\cdot\Vert{\V\V^{\top}-\V^*\V^{*\top}}\Vert_F.
\end{align*}}

Since $\X^*$ is an optimal solution, it follows from Lemma \ref{lem:eigsOfOptGradSDP} that
{\small
\begin{align*}
-\V^*\V^{*\top}\bullet\nabla{}f(\X^*) = -r\mu_n.
\end{align*}}
Also, since $f(\cdot)$ is $\beta$-smooth, 
{\small
\begin{align}\label{eq:lem:proj:2}
\Vert{\widetilde{\nabla}-\nabla{}f(\X^*)}\Vert \leq \Vert{\widetilde{\nabla}-\nabla{}f(\X)}\Vert + \Vert{\nabla{}f(\X)-\nabla{}f(\X^*)}\Vert_F \leq \xi + \beta\Vert{\X-\X^*}\Vert_F.
\end{align}}

Finally, using the Davis-Kahan $\sin\theta$ theorem (see for instance Theorem 2 in \cite{Yu2014useful}), we have that
{\small
\begin{align*}
\Vert{\V\V^{\top}-\V^*\V^{*\top}}\Vert_F &\leq \frac{2\Vert{\X-\X^*}\Vert_F}{\lambda_r(\X^*)-\lambda_{r+1}(\X^*)}  = \frac{2\Vert{\X-\X^*}\Vert_F}{\lambda_r(\X^*)}.
\end{align*}}

Thus, combining these three bounds, we have that
\begin{align}\label{eq:lem:proj:4}
-\V\V^{\top}\bullet\widetilde{\nabla} &\geq  -r\mu_n -r\xi - \sqrt{r}\Vert{\X-\X^*}\Vert_F\left({\sqrt{r}\beta+\frac{2\sqrt{2}\Vert{\widetilde{\nabla}}\Vert}{\lambda_r(\X^*)}}\right).
\end{align}

On the other-hand,
\begin{align}\label{eq:lem:proj:5}
-\sum_{i=1}^{r+1}\lambda_{n-i+1}(\widetilde{\nabla}) &= \sum_{i=1}^{r+1}\lambda_{i}(-\widetilde{\nabla}) = \sum_{i=1}^{r+1}\lambda_i(-\nabla{}f(\X^*) + (\nabla{}f(\X^*)-\widetilde{\nabla}))\\
& \underset{(a)}{\leq} \nonumber \sum_{i=1}^{r+1}\lambda_i(-\nabla{}f(\X^*)) + \sum_{i=1}^{r+1}\lambda_i(\nabla{}f(\X^*)-\widetilde{\nabla}) \nonumber \\
&\leq  -\sum_{i=1}^{r+1}\lambda_{n-i+1}(\nabla{}f(\X^*)) + (r+1)\Vert{\nabla{}f(\X^*)-\widetilde{\nabla})}\Vert  \nonumber \\
&\underset{(b)}{=} -((r+1)\mu_n+\delta) + (r+1)\Vert{\nabla{}f(\X^*)-\widetilde{\nabla})}\Vert  \nonumber \\
&\underset{(c)}{\leq}  -((r+1)\mu_n+\delta) + (r+1)(\xi + \beta\Vert{\X-\X^*}\Vert_F), 
\end{align}
where (a) follows from Ky Fan's eigenvalue inequality, and (b) follows from Lemma \ref{lem:eigsOfOptGradSDP} and our assumption on the eigenvalues of $\nabla{}f(\X^*)$, and (c) follows from \eqref{eq:lem:proj:2}.

Plugging \eqref{eq:lem:proj:4} and \eqref{eq:lem:proj:5} into \eqref{eq:lem:proj:1}, we arrive at the following sufficient condition so that $\rank\left({\Pi_{\mS_n}[\X-\eta\widetilde{\nabla}]}\right) \leq r$, 
{\small
\begin{align*}
-r\mu_n -r\xi - \sqrt{r}\Vert{\X-\X^*}\Vert\left({\sqrt{r}\beta+\frac{2\sqrt{2}\Vert{\widetilde{\nabla}}\Vert}{\lambda_r(\X^*)}}\right)\geq   -r\left({\mu_n+\frac{\delta}{r+1}}\right) + r\left({\xi+ \beta\Vert{\X-\X^*}\Vert_F}\right),
\end{align*}}

which is equivalent to the condition
$\Vert{\X-\X^*}\Vert_F \leq \left({2r\beta + \frac{2\sqrt{2r}{\Vert{\widetilde{\nabla}}\Vert}}{\lambda_r(\X^*)} }\right)^{-1}\left({\frac{r}{r+1}\delta - 2r\xi}\right)$.
Simplifying the above expression gives the result.
\end{proof}

\begin{lemma}\label{lem:martingaleDistNew}
Fix $p\in(0,1)$. Let $\X_1,\dots,\X_{T}$ be a sequence generated by Algorithm \ref{alg:sgd} such that for all $t\in[T-1]$, $\eta_t = \eta$ for some $\eta>0$ satisfying $1/\eta = \tilde{\Theta}(\sqrt{T})$, and with mini-batch size $L = O(\textrm{poly}(\log{}T))$.  Then, for any $\X^*\in\mX^*$ and any $T$ large enough, it holds with probability at least $1-p$ that for all $t\in[T]$:
\begin{align*}
\Vert{\X_t-\X^*}\Vert_F^2 &\leq \Vert{\X_1-\X^*}\Vert_F^2 + G^2T\eta^2 + \sqrt{40T\eta^2\sigma^2/L}\sqrt{\log(T/p)}.
\end{align*}
\end{lemma}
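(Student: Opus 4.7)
The plan is to combine the standard one-step SGD recursion with a Freedman-type martingale concentration inequality, using the fact that the spectrahedron $\mS_n$ is bounded in Frobenius norm to get a uniform variance bound.

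First, I would invoke the non-expansiveness of the Euclidean projection onto $\mS_n$, which for every $s\in[T-1]$ gives
\[
\Vert{\X_{s+1}-\X^*}\Vert_F^2 \leq \Vert{\X_s-\X^*}\Vert_F^2 - 2\eta\,\widehat{\nabla}_s\bullet(\X_s-\X^*) + \eta^2\Vert{\widehat{\nabla}_s}\Vert_F^2.
\]
Using $\Vert{\widehat{\nabla}_s}\Vert_F\leq G$ and the decomposition $\widehat{\nabla}_s = \nabla f(\X_s) + \zeta_s$, where $\zeta_s := \widehat{\nabla}_s - \nabla f(\X_s)$, the convexity of $f$ makes $-2\eta\,\nabla f(\X_s)\bullet(\X_s-\X^*)\leq 0$, so it can be dropped. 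Telescoping from $s=1$ to $s=t-1$ then produces
\[
\Vert{\X_t-\X^*}\Vert_F^2 \leq \Vert{\X_1-\X^*}\Vert_F^2 + (t-1)\eta^2 G^2 + 2\eta\, S_{t-1},\qquad S_t := \sum_{s=1}^{t} Y_s,
\]
with $Y_s := \zeta_s\bullet(\X^*-\X_s)$. The remainder of the argument reduces to proving a uniform high-probability upper bound on $\max_{t\leq T}|S_t|$.

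Next, I would observe that $\{Y_s\}$ is a martingale difference sequence with respect to the natural filtration $\{\mathcal{F}_s\}$ of Algorithm~\ref{alg:sgd}: indeed $\E[\zeta_s\,|\,\mathcal{F}_{s-1}]=0$ and $\X^*-\X_s$ is $\mathcal{F}_{s-1}$-measurable. Since any $\X\in\mS_n$ satisfies $\Vert{\X}\Vert_F\leq\Vert{\X}\Vert_*=\trace(\X)=1$, we have the deterministic bound $\Vert{\X_s-\X^*}\Vert_F\leq 2$; combined with $\Vert{\zeta_s}\Vert_F\leq 2G$ a.s.\ and the mini-batch variance bound $\E[\Vert{\zeta_s}\Vert_F^2\,|\,\mathcal{F}_{s-1}]\leq\sigma^2/L$, Cauchy--Schwarz yields
\[
|Y_s|\leq 4G\text{ a.s.},\qquad \E[Y_s^2\,|\,\mathcal{F}_{s-1}]\leq \frac{4\sigma^2}{L},\qquad V_T := \sum_{s=1}^{T-1}\E[Y_s^2\,|\,\mathcal{F}_{s-1}]\leq \frac{4T\sigma^2}{L}.
\]

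I would then apply Freedman's maximal inequality to $\{S_t\}$ with $R=4G$ and $V_T\leq 4T\sigma^2/L$, taking $a$ of order $\sqrt{T\sigma^2\log(T/p)/L}$. The variance term of the Freedman bound produces the desired $\Theta(\log(T/p))$ exponent, and the Bernstein-type correction $Ra/3$ is dominated by $V_T$ as soon as $T/(L\log T)$ is large enough, which is guaranteed by the hypotheses $1/\eta=\tilde{\Theta}(\sqrt{T})$ and $L=O(\mathrm{poly}(\log T))$ together with ``$T$ sufficiently large''. Multiplying the resulting bound on $|S_{t-1}|$ by $2\eta$ and tuning constants gives exactly the $\sqrt{40T\eta^2\sigma^2/L}\sqrt{\log(T/p)}$ term in the statement, uniformly in $t\in[T]$.

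The main obstacle is arranging for the concentration to be governed by the mini-batch variance $\sigma^2/L$ rather than the worst-case magnitude $G$ (which would produce a $\sigma$-independent and qualitatively weaker bound). This is what forces the use of Freedman rather than Azuma--Hoeffding, and makes essential use of (i) the deterministic bound $\Vert{\X_s-\X^*}\Vert_F\leq 2$ on the spectrahedron so that the predictable quadratic variation can be bounded without a self-referential argument in which $V_T$ depends on the very quantity we are trying to control, and (ii) the ``$T$ large enough'' hypothesis to ensure that the linear-in-$a$ Bernstein correction is subdominant to the variance.
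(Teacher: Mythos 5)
Your proof is correct and follows essentially the same route as the paper: both reduce the claim to a Freedman/Bernstein-type martingale concentration bound in which the predictable quadratic variation is controlled by the mini-batch variance $\sigma^2/L$ and the bounded differences by $O(\eta G)$, followed by a maximal/union bound over $t\in[T]$. The only difference is organizational: the paper tracks the pre-projection iterates $\Y_t = \X_{t-1}-\eta\widehat{\nabla}_{t-1}$ and forms the supermartingale $Z_t = \Vert{\Y_t-\X^*}\Vert_F^2 - G^2\sum_{i<t}\eta_i^2$, whereas you apply non-expansiveness of $\Pi_{\mS_n}$ at every step and isolate the explicit martingale-difference sum $\sum_s \zeta_s\bullet(\X^*-\X_s)$, an equivalent bookkeeping that slightly streamlines the variance bound by absorbing the $\eta^2\Vert{\widehat{\nabla}_s}\Vert_F^2$ term deterministically.
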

\begin{proof}
Define the auxiliary sequence $\{\Y_t\}_{t=1}^T$ as follows: $\Y_1=\X_1$ and for all $t\in[T-1]$, $\Y_{t+1} := \X_t - \eta_t\widehat{\nabla}_t$. Recall that with these definitions we have that for all $t\in[T]$, $\X_t = \Pi_{\mS_n}[\Y_t]$. 

Throughout the proof let us fix some optimal solution $\X^*\in\mX^*$. We begin with the observation that for all $t\in[T-1]$ it holds that
{\small
\begin{align*}
\E[\Vert{\Y_{t+1}-\X^*}\Vert_F^2~|~\X_t] &= \E[\Vert{\X_t-\eta_t\widehat{\nabla}_t-\X^*}\Vert_F^2~|~\X_t] \\
&\leq  \Vert{\X_t-\X^*}\Vert_F^2 - 2\eta_t\E[(\X_t-\X^*)\bullet\widehat{\nabla}_t~|~\X_t] + \eta_t^2G^2 \\
&=  \Vert{\X_t-\X^*}\Vert_F^2 - 2\eta_t(\X_t-\X^*)\bullet\nabla{}f(\X_t) + \eta_t^2G^2 \\
& \underset{(a)}{\leq} \Vert{\X_t-\X^*}\Vert_F^2 - 2\eta_t(f(\X_t)-f(\X^*)) + \eta_t^2G^2 \underset{(b)}{\leq}  \Vert{\Y_t-\X^*}\Vert_F^2 + \eta_t^2{}G^2,
\end{align*}}
where (a) follows from the convexity of $f(\cdot)$, and (b) follows since $\X_t$ is the projection of $\Y_t$ onto $\mS_n$.


For all $t\in[T]$ define the random variable $Z_t := \Vert{\Y_t-\X^*}\Vert_F^2 - G^2\sum_{i=1}^{t-1}\eta_i^2$. Note that $Z_1,\dots,Z_T$ forms a submartingale sequence w.r.t. the filtration  $\mathbf{F}:=\{\mF_{t}:=\{\X_1,\dots,\X_{t}\}\}_{t=1}^{T-1}$. This holds since using 
the above inequality, we have that for all $t\in[T-1]$:
{\small
\begin{align*}
\E[Z_{t+1}|\mF_{t}] &= \E[\Vert{\Y_{t+1}-\X^*}\Vert_F^2|\X_t] - G^2\sum_{i=1}^{t}\eta_i^2 \leq  \Vert{\Y_{t}-\X^*}\Vert_F^2 + G^2\eta_t ^2- G^2\sum_{i=1}^{t}\eta_i^2 =Z_t.
\end{align*}
}
We continue to show that this submartingale has bounded-differences and to upper-bound its variance. It holds for all $2\leq t\leq T$ that
{\small
\begin{align*}
Z_t - \E[Z_t|\mF_{t-1}]  &= \Vert{\Y_t-\X^*}\Vert_F^2 - \E[\Vert{\Y_{t}-\X^*}\Vert_F^2|\X_{t-1}] \\
&= \Vert{\X_{t-1}-\X^*-\eta_{t-1}\widehat{\nabla}_{t-1}}\Vert_F^2 - \E[\Vert{\X_{t-1}-\X^*-\eta_{t-1}\widehat{\nabla}_{t-1}}\Vert_F^2|\X_{t-1}] \\
&\leq -2\eta_{t-1}(\X_{t-1}-\X^*)\bullet\widehat{\nabla}_{t-1} + \eta_{t-1}^2\Vert{\widehat{\nabla}_{t-1}}\Vert_F^2 + 2\eta_{t-1}(\X_{t-1}-\X^*)\bullet\E[\widehat{\nabla}_{t-1}|\X_{t-1}] \\
&\underset{(a)}{\leq} 4\sqrt{2}\eta_{t-1}G + \eta_{t-1}^2G^2 \underset{(b)}{\leq} 6\eta{}G,
\end{align*}}
where (a) follows the Cauchy-Schwarz inequality and plugging the Euclidean diameter of $\mS_n$ and the bound $G$ on the norm of the stochastic gradients, and (b) holds for any $T$ sufficiently large. We continue to upper-bound the conditional variance. For any $2\leq t \leq T$ we have that
{\small
\begin{align*}
\var\left({Z_t|\mF_{t-1}}\right) &= \var\left({\Vert{\Y_t-\X^*}\Vert_F^2|\mF_{t-1}}\right) \\
&= \var\left({\Vert{\X_{t-1}-\X^* - \eta_{t-1}\widehat{\nabla}_{t-1}}\Vert_F^2|\X_{t-1}}\right)\\
 &=\var\left({\eta_{t-1}^2\Vert{\widehat{\nabla}_{t-1}}\Vert_F^2 - 2\eta_{t-1}(\X_{t-1}-\X^*)\bullet\widehat{\nabla}_{t-1}|\X_{t-1}}\right) \\
&\underset{(a)}{\leq} 2\eta_{t-1}^4\var\left({\Vert{\widehat{\nabla}_{t-1}}\Vert_F^2|\X_{t-1}}\right) + 8\eta_{t-1}^2\var\left({(\X_{t-1}-\X^*)\bullet\widehat{\nabla}_t|\X_{t-1}}\right). 
\end{align*}}
Thus,
{\small
\begin{align*}
\var\left({Z_t|\mF_{t-1}}\right)&\leq 2\eta_{t-1}^4\var\left({\Vert{\widehat{\nabla}_{t-1}}\Vert_F^2|\X_{t-1}}\right) + 8\eta_{t-1}^2\var\left({(\X_{t-1}-\X^*)\bullet(\widehat{\nabla}_t-\nabla{}f(\X_{t-1}))|\X_{t-1}}\right) \\
&\leq 2\eta_{t-1}^4\E\left[{\Vert{\widehat{\nabla}_{t-1}}\Vert_F^4|\X_{t-1}}\right] + 8\eta_{t-1}^2\E\left[{\left({(\X_{t-1}-\X^*)\bullet(\widehat{\nabla}_{t-1}-\nabla{}f(\X_{t-1}))}\right)^2|\X_{t-1}}\right] \\
&\underset{(b)}{\leq}  8\eta_{t-1}^2\Vert{\X_{t-1}-\X^*}\Vert_F^2\E\left[{\Vert{\widehat{\nabla}_{t-1}-\nabla{}f(\X_{t-1})}\Vert_F^2|\X_{t-1}}\right] + 2\eta_{t-1}^4G^4  \\
&\underset{(c)}{\leq} 2\eta_{t-1}^4G^4 + 16\eta_{t-1}^2\sigma^2/L,
\end{align*}}
where (a) follows since $\var(X+Y) \leq 2(\var(X)+\var(Y))$, (b) follows from the Cauchy-Schwarz inequality, and (c) follows from plugging the Euclidean diameter of $\mS_n$ and the variance of the mini-batch stochastic gradient.

Now, using a standard concentration argument for submartingales (see Theorem 7.3 in \cite{Chung06}, which we apply with parameters $a_i=0, \phi_i=0$), we have that for any $\Delta = O(\textrm{poly}(\log{}T))$, $L = O(\textrm{poly}(\log{}T))$, $1/\eta = \tilde{\Theta}(\sqrt{T})$, and $T$ large enough,
{\small
\begin{align*}
\Pr\left({Z_t \geq Z_1 + \Delta }\right) &\leq \exp\left(\frac{-\Delta^2}{\sum_{i=1}^{t-1}\left({4\eta_i^4G^4+32\eta_i^2\sigma^2/L}\right) + 4\eta{}G\Delta}\right) \\
&\leq \exp\left(\frac{-\Delta^2}{4T\eta^4G^4+32T\eta^2\sigma^2/L + 4\eta{}G\Delta}\right) \leq \exp\left(\frac{-\Delta^2}{40T\eta^2\sigma^2/L}\right).
\end{align*}}


Thus, for  $\Delta:=\sqrt{40T\eta^2\sigma^2/L}\sqrt{\log\frac{1}{p'}}$ it holds that with probability at least $1-p'$ that
$\Vert{\Y_t-\X^*}\Vert_F^2 \leq \Vert{\Y_1-\X^*}\Vert_F^2 +  G^2\sum_{i=1}^{t-1}\eta_i^2 + \Delta \leq \Vert{\Y_1-\X^*}\Vert_F^2 +  G^2T\eta^2 + \Delta$. 

Now, recalling that $\Y_1 = \X_1$, and since $\X_t$ is the projection of $\Y_t$ onto $\mS_n$, we have that with probability at least $1-p'$ it holds that
$\Vert{\X_t-\X^*}\Vert_F^2 \leq \Vert{\X_1-\X^*}\Vert_F^2 +  G^2T\eta^2+\Delta$. The Lemma now follows from setting $p' =p/T$ and using the union-bound for all $t\in[T]$.
\end{proof}

We can now prove Theorem \ref{thm:main}.

\begin{proof}[Proof of Theorem \ref{thm:main}]

Suppose for now that the iterates $\X_1,\dots,\X_T$ are computed using exact Euclidean projection. Then, standard results (see for instance proof of Theorem 6.1 in \cite{Bubeck15})  give that for any step-size $\eta >0$ it holds that
{\small
\begin{align*}
\E\Big[{\frac{1}{T}\sum_{t=1}^Tf(\X_t)}\Big] - f^* = O\left({\frac{\Vert{\X_1-\X^*}\Vert_F^2}{\eta{}T} + \eta{}G^2}\right).
\end{align*}}

Thus, for both options of the returned solution $\bar{\X}$ in Algorithm \ref{alg:sgd} (using the convexity of $f(\cdot)$ for \textbf{option II}), we have that
\vspace{-7pt}
{\small
\begin{align*}
\E\left[{f(\bar{\X})}\right] - f^* = O\left({\frac{\Vert{\X_1-\X^*}\Vert_F^2}{\eta{}T} + \eta{}G^2}\right).
\end{align*}}

In particular, using Markov's inequality, we have that with probability at least $3/4$ it holds that
{\small
\begin{align*}
f(\bar{\X}) - f^* = O\left({\frac{\Vert{\X_1-\X^*}\Vert_F^2}{\eta{}T} + \eta{}G^2}\right).
\end{align*}}

Using a standard Matrix Hoeffding concentration argument (see for instance \cite{Tropp12}), we have that with probability at least $9/10$, under the batch-size listed in the theorem, it holds that
$\forall t\in[T-1]$: $ \Vert{\widehat{\nabla}_t - \nabla{}f(\X_t)}\Vert \leq \delta/(2r)$.

Also, using Lemma \ref{lem:martingaleDistNew}, we have for any $T$ sufficiently large that with probability at least $7/8$ it holds that for all $1 \leq t \leq T-1$
$\Vert{\X_t-\X^*}\Vert_F^2 \leq \Vert{\X_1-\X^*}\Vert_F^2 + G^2T\eta^2 + \sqrt{40\eta^2\sigma^2T/L}\sqrt{\log(8T)}$.

Thus, we have that for $\eta = \frac{R_0}{10G\log(8T)\sqrt{T}}$ and $L\geq (\sigma/G)^2R_0^{-2}$, combining all of the above guarantees, we have that with probability at least $1/2$, all following three guarantees hold:
\vspace{-6pt}
\begin{eqnarray*}
&i)~f(\bar{\X}) - f^* = O\left({\frac{GR_0\log{T}}{\sqrt{T}}}\right), \quad ii)~\forall t\in[T-1]:\quad \Vert{\X_t-\X^*}\Vert_F \leq R_0, &\\
&iii)~\forall t\in[T-1]:\quad \Vert{\widehat{\nabla}_t-\nabla{}f(\X_t)}\Vert \leq \frac{\delta}{2r}. &
\end{eqnarray*}
Thus, by invoking Lemma \ref{lem:goodProj}, with the above probability, for all $t\in[T]$ it holds that $\rank(\X_t) \leq r$. In particular, using \textbf{option I} in Algorithm \ref{alg:sgd}, the returned solution $\bar{\X}$ is also of rank at most $r$.
\end{proof}

\section{Preliminary Empirical Evidence}

\begin{table*}[h!]\renewcommand{\arraystretch}{1.1}
{\footnotesize
\begin{center}
  \caption{Information on experiments.  Column $\rank(\X^*)$ is taken from \cite{Garber19}. Column ``SVD rank" records the SVD rank used to compute the projection on each iteration, and the column ``max rank" records the maximum rank of any of the iterates produced by the algorithm.
  }\label{table:exps}
  \begin{tabular}{| c | c | c | c | c | c | c | c |} \hline
  \multicolumn{2}{|c|}{setting} &\multicolumn{3}{|c|}{low rank SGD}& \multicolumn{3}{|c|}{high rank SGD} \\ \hline
  trace ($\tau$) & $\rank(\X^*)$ &  step-size & SVD rank & max rank & step-size & SVD rank & max rank \\ \hline
  3000 & 10 & 0.02 & 10 & 10 & $1/\sqrt{t}$ & 250 & 250 \\ \hline
  3500 & 41 & 0.007 & 41 & 41 & $1/\sqrt{t}$ & 250 & 250 \\ \hline
  4000 & 70 & 0.005 & 70 & 70 & $1/\sqrt{t}$ & 250 & 250 \\ \hline
  \end{tabular}
\end{center}
}
\vskip -0.2in
\end{table*}\renewcommand{\arraystretch}{1}

\begin{figure*}[h!]
    \centering
        \includegraphics[width=1.45in]{./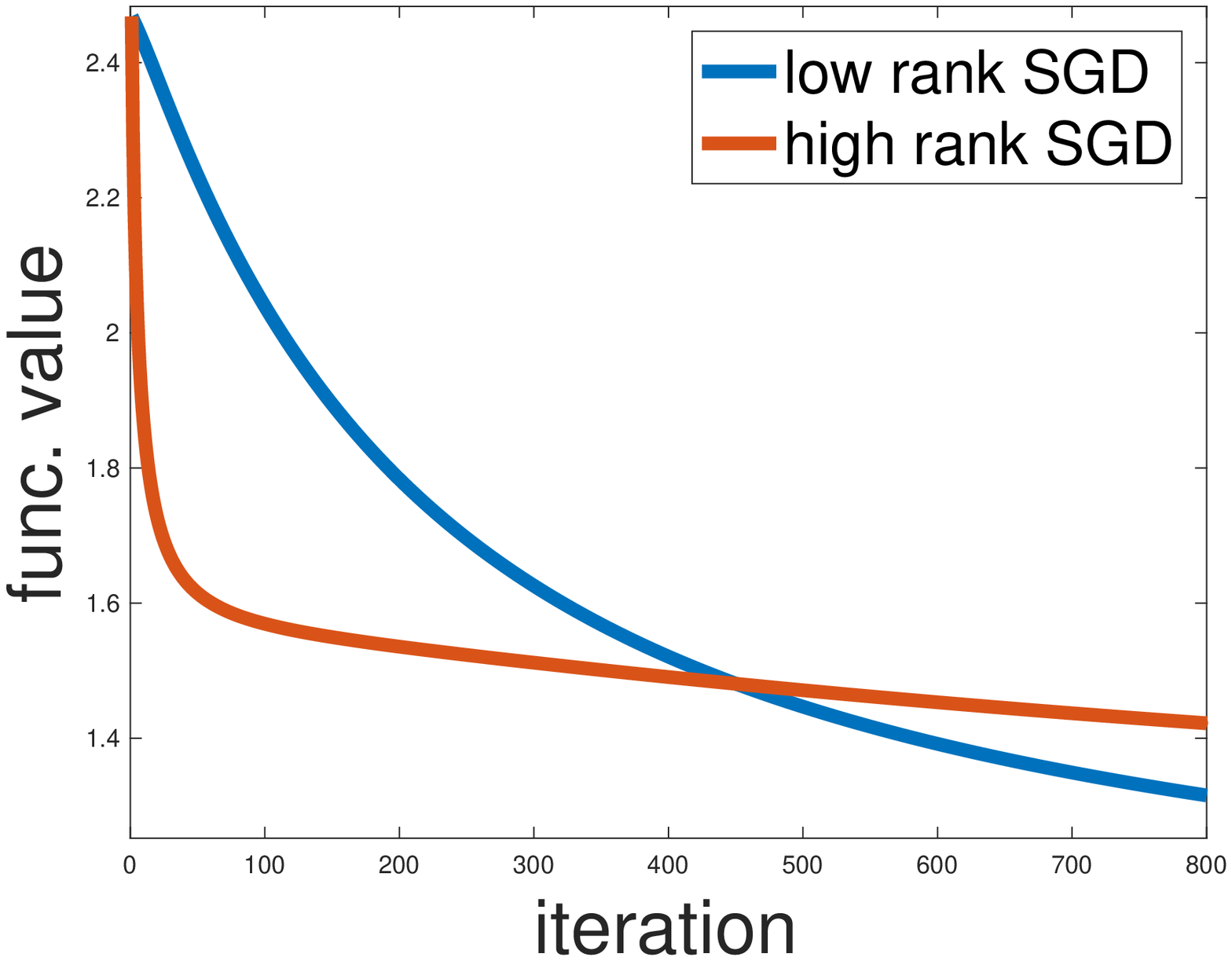}
        \includegraphics[width=1.45in]{./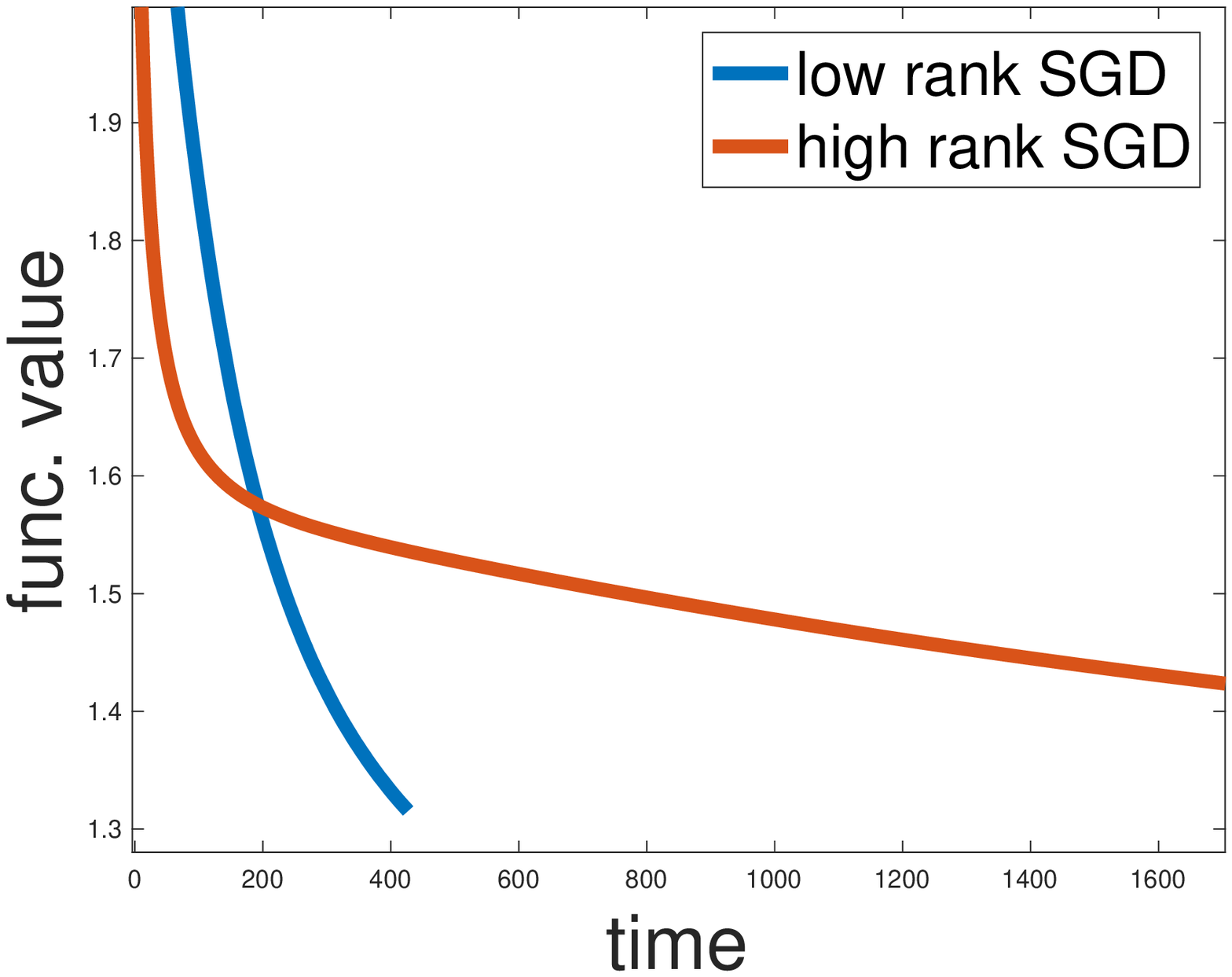}        
        \includegraphics[width=1.45in]{./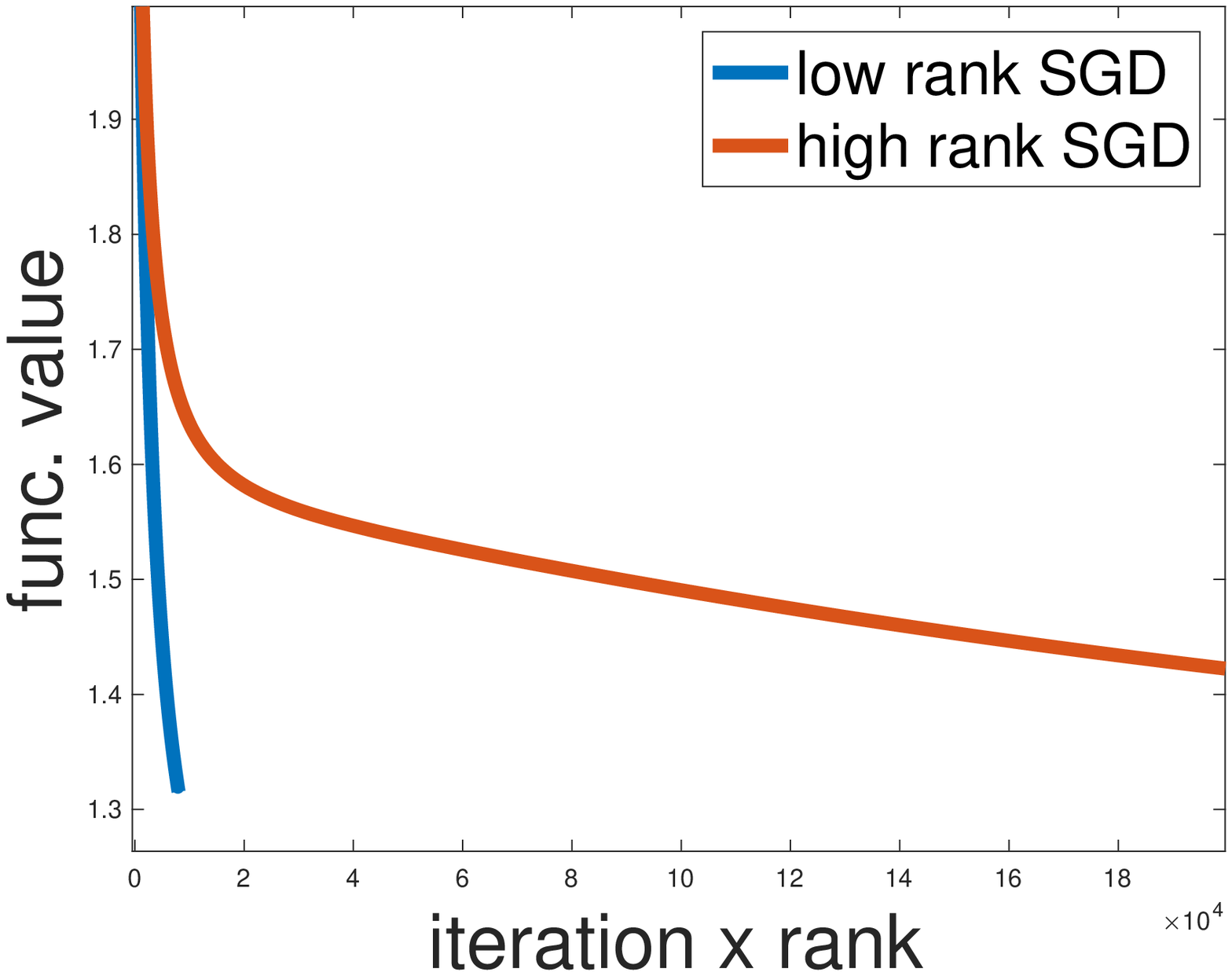}
	\caption*{trace = 3000}
        \includegraphics[width=1.45in]{./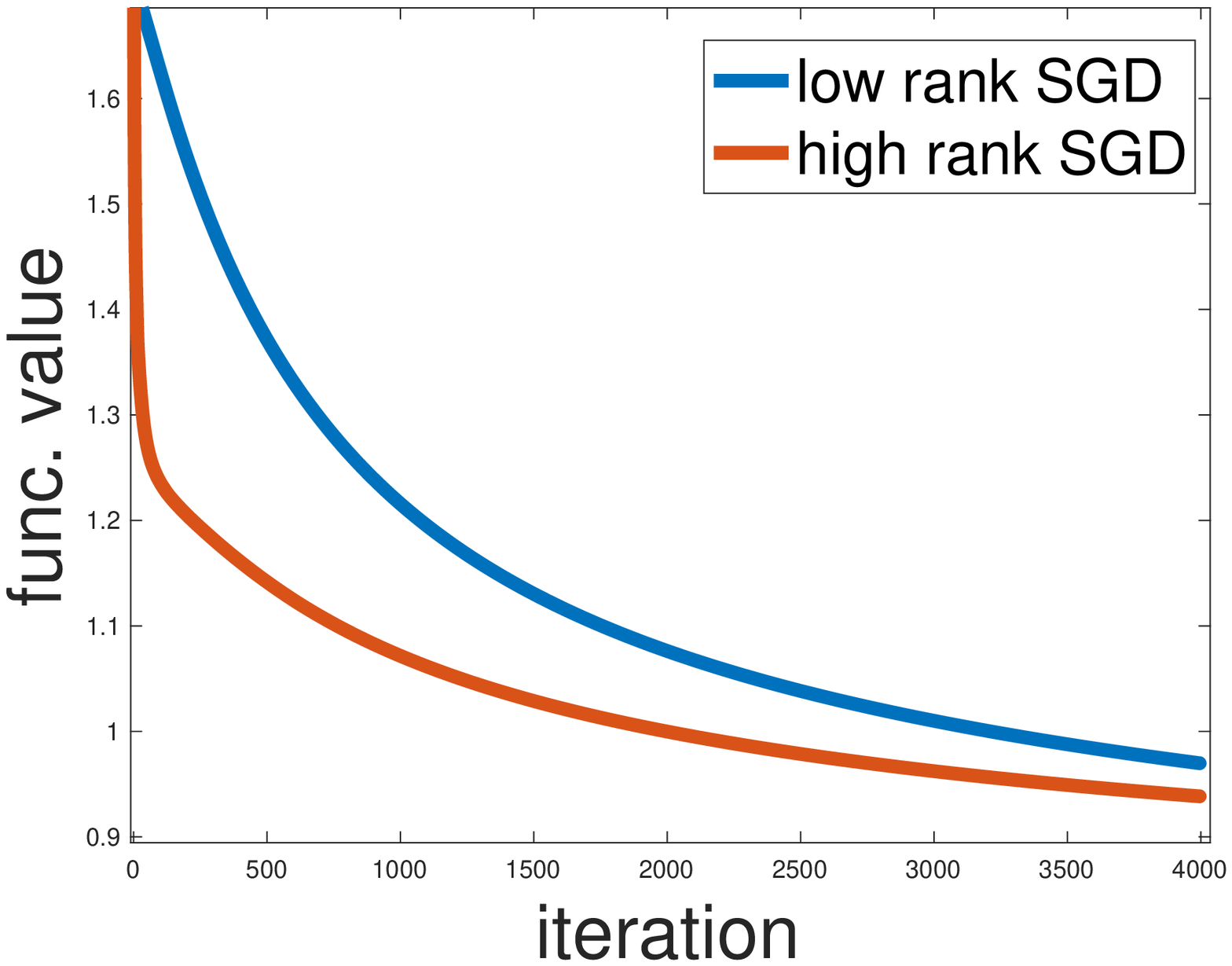}
        \includegraphics[width=1.45in]{./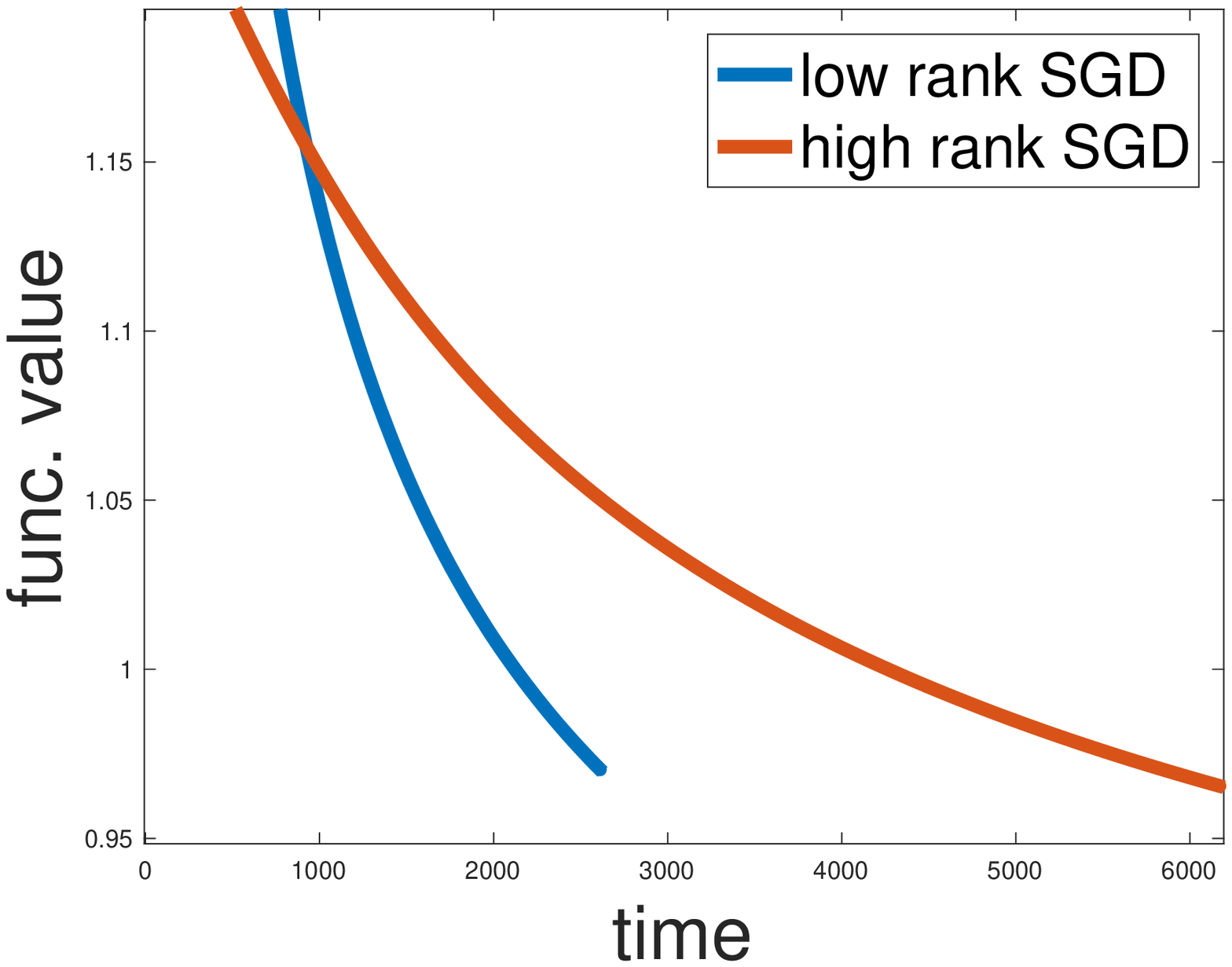}
        \includegraphics[width=1.45in]{./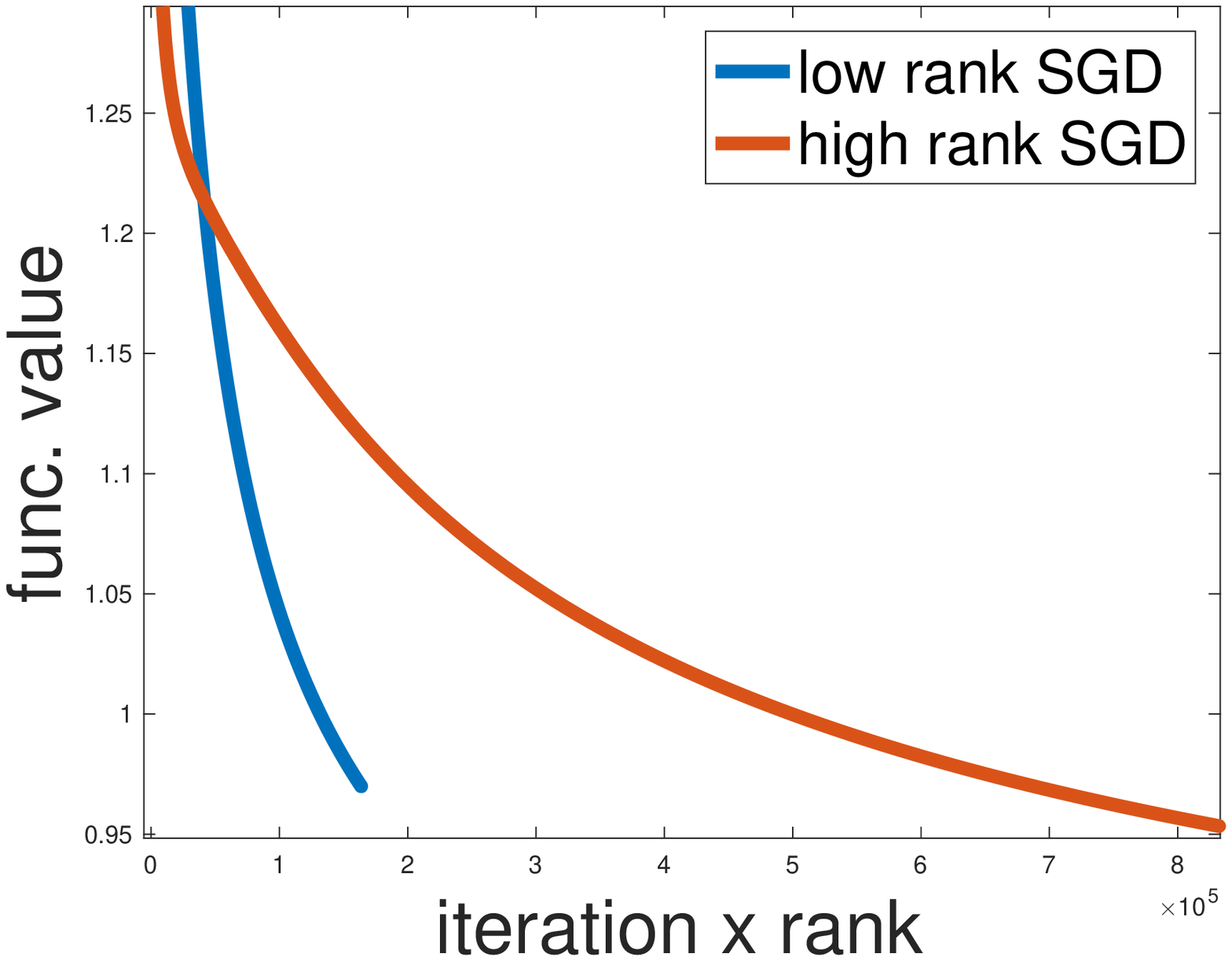}
        \caption*{trace = 3500}
        
        \includegraphics[width=1.45in]{./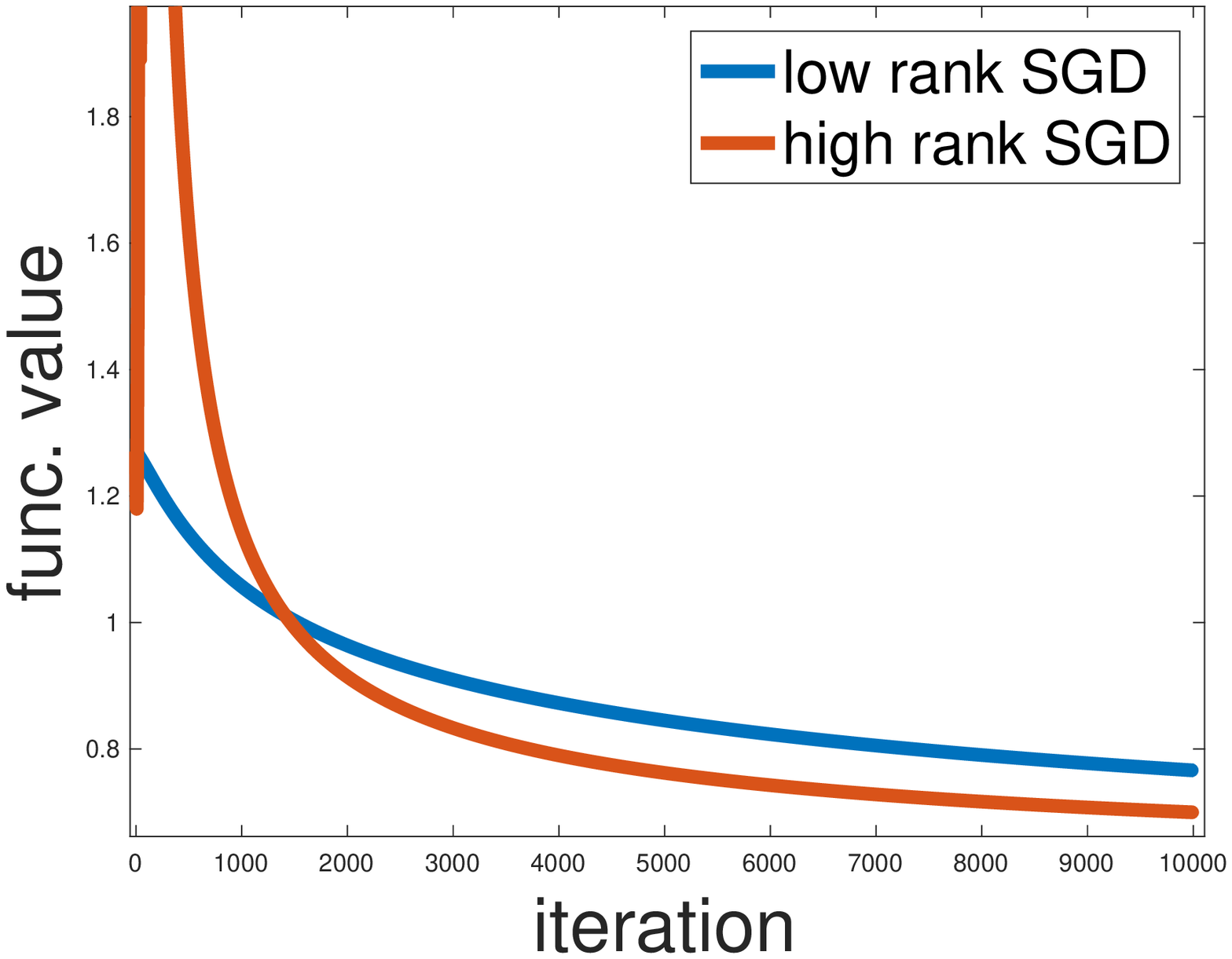}
        \includegraphics[width=1.45in]{./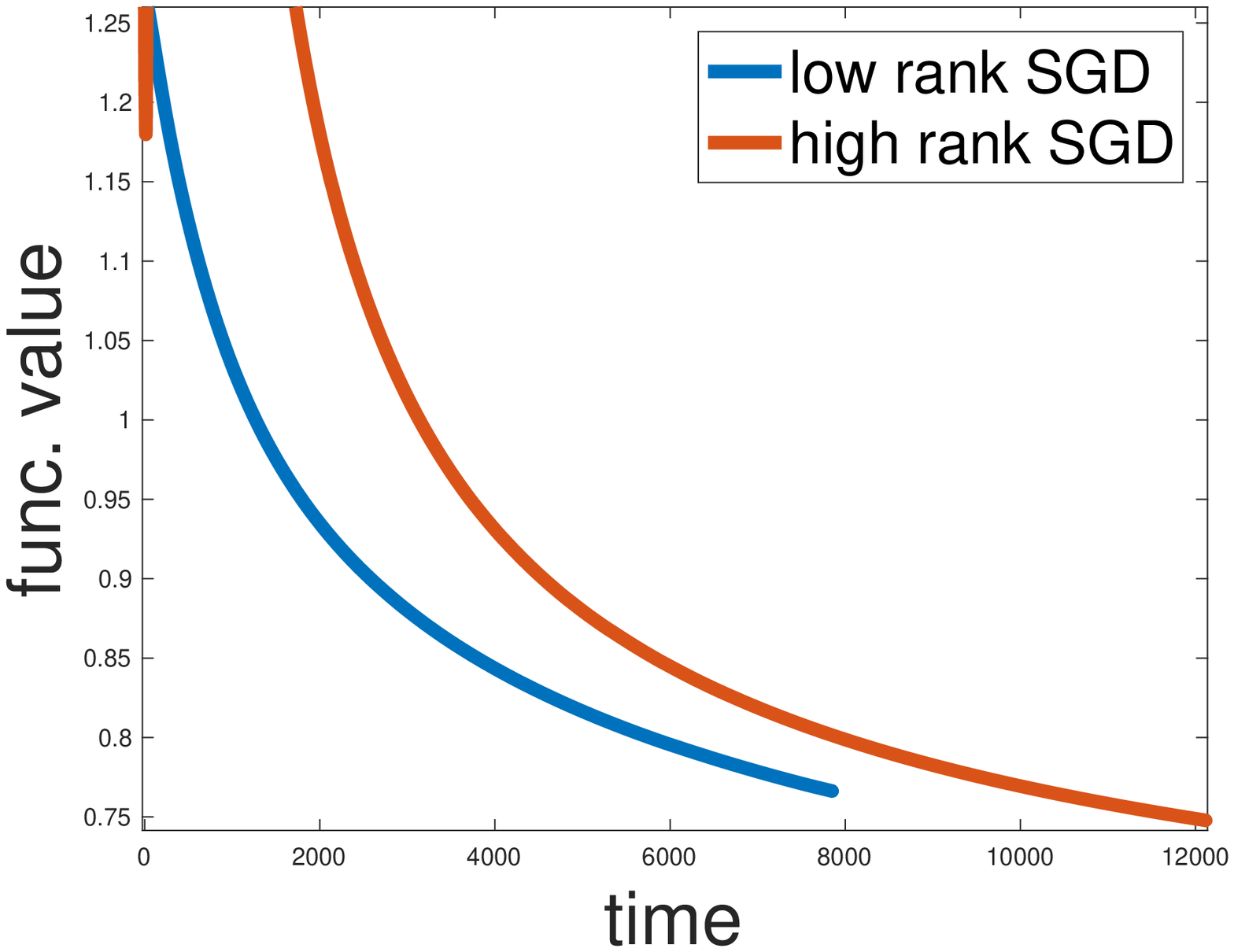}        
        \includegraphics[width=1.45in]{./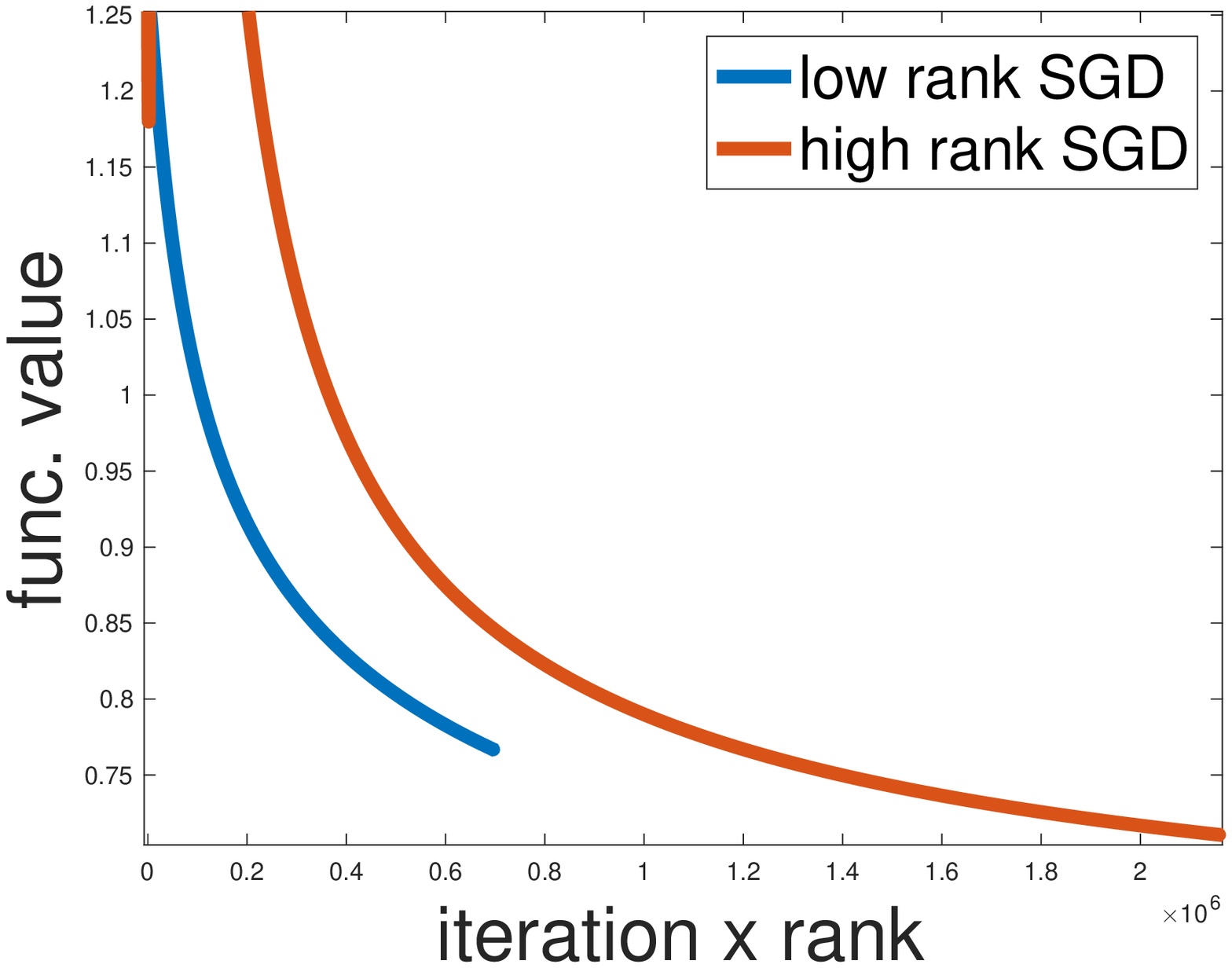}
        \caption*{trace = 4000}
    \caption{Performance of low rank SGD vs. (standard) high rank SGD. Each graph is the average of 5 i.i.d runs.}\label{fig:results}
\end{figure*}

The goal of this section is to motivate our theoretical investigation from an empirical point of view. Our main result, Theorem \ref{thm:main}, relies on an eigen-gap assumption (Assumption \ref{ass:gap}), a ``warm-start" initialization, and certain choice of step-size which depends on several parameters. In \cite{Garber19} it was already demonstrated that Assumption \ref{ass:gap} holds empirically for the highly popular matrix completion task. Here, we demonstrate  empirically, that SGD with low-rank projections converges correctly (i.e., the projection with low-rank SVD is always the accurate projection) for matrix completion with a very simple initialization scheme, and is competitive with a standard implementation of SGD, which uses high-rank SVD.

We use the standard MovieLens100K dataset (943x1682 matrix with 100,000 observed entries) \cite{Movielens16}. \footnote{We focus on this dataset and not larger ones because of the difficulty in scaling standard SGD, which requires high rank SVDs, to larger datasets.} Formally, our objective is the following:
\vspace{-8pt}
\begin{align}\label{eq:matrixComp}
\min_{\X\in\reals^{m\times n}: \Vert{\X}\Vert_*\leq \tau}\frac{1}{|S|}\sum_{(i,j,r)\in{}S}(\X_{i,j}-r)^2,
\end{align}
where $S$ is the set of observed entries (each entry is a triplet consisting of a matrix entry ($i,j$) and a scalar ranking ($r$)), and $\Vert\cdot\Vert_*$ is the trace norm of a matrix (sum of singular values). Problem \eqref{eq:matrixComp} could be directly formulated in the form of the canonical problem \eqref{eq:optProb} using standard manipulations (see for instance \cite{Jaggi10}). 

Following the experiments in \cite{Garber19}, we use different values for the trace norm bound $\tau$, which in turn affects the rank of the optimal solution $\X^*$. 
For both variants of SGD and for all experiments we use a batch-size of $L = 5000$ (5\% of the data).

For low rank SGD we always compute the projection using thin SVD with rank equal that of the optimal solution (see Table \ref{table:exps}). Also, we use a fixed step-size on all iterations which is tuned manually for every value of $\tau$, so that indeed throughout all iterations, the rank of the true projection is at most the rank of the SVD used (which we verify by examining the condition on the threshold parameter $\lambda$ in Lemma \ref{lem:spectrahedronProj}). Thus, to be clear, with this tuned step-size, the low rank projection is \textbf{always} (up to negligible numerical error) the correct projection, which matches our theoretical investigation.

For the standard (high rank) implementation of SGD, in order to allow for more realistic implementation, we set the SVD rank used to compute the projection to 250 (instead of $\min\{m,n\}$, see Table \ref{table:exps}). In all experiments we use a diminishing step-size of $\eta_t = 1/\sqrt{t}$ which follows the standard theoretical convergence results on SGD (up to constants, see \cite{Bubeck15} for instance), without additional tuning.

We initialize both variants with the same point (based on assigning each unobserved entry the mean value of the observed ones and taking a low rank SVD with rank that matches that of the optimal solution). Each experiment is the average of 5 i.i.d runs (due to the randomness in the mini-batch). The experiments were implemented in MATLAB with the \texttt{svds} command used to compute thin SVD.  We record the objective value \eqref{eq:matrixComp} as a function of the number of iterations (for both variants we calculate the objective at the average of iterates obtained so far), and the runtime (in seconds). Additionally, to give an approximate measure of time that is implementation-independent, we also plot the function value vs. the number of iterations scaled by the SVD rank used by each algorithm. This is because in theory (and also often in practice) the time to compute a thin SVD scales linearly with the rank of the SVD required.

It can be seen in Figure \ref{fig:results} that standard SGD (with step size $1/\sqrt{t}$) seems to exhibit faster converge rates in terms of \#iterations (perhaps with $\tau=3000$ being the exception), due to the smaller step-size required by the low rank variant to guarantee low rank projections. However, when examining either the runtime or the convergence rate scaled by SVD rank, we see that as expected, low rank SGD is significantly faster. Also, as recorded in Table \ref{table:exps}, while all iterates of low rank SGD indeed remain low rank, the iterates of high rank SGD always reach at some point the maximal rank used of 250, indicating that using a larger step-size indeed comes with a price.

\section{Discussion}
The main message we hope to convey in this work is that, perhaps in contrast to current popular belief, convex optimization methods can indeed be efficient for large-scale low-rank matrix problems, from the point of view of both theory and practice. We thus believe that it is worthwhile to continue studying their efficient implementations, perhaps under suitable assumptions.

There are two avenues for further research which could be of interest. First, Theorem \ref{thm:main} holds only with constant probability and not with high probability. Second, our analysis requires taking mini-batches. Since our objective is smooth, we may expect that these mini-batches will improve the convergence rate (see for instance Theorem 6.3 in \cite{Bubeck15} which, roughly speaking, shows the rate improves by a factor of $\sqrt{L}$, where $L$ is the mini-batch size). Unfortunately, our current analysis requires taking too small step-sizes (in order for the iterates to stay close enough to the optimal solution, see Lemma \ref{lem:martingaleDistNew}) to leverage the variance reduction due to the mini-batch.

\section*{Acknowledgments}
This research was supported by the ISRAEL SCIENCE FOUNDATION (grant No. 1108/18).

\bibliography{bib}

\appendix

\section{Proof of Lemma \ref{lem:robustRank}}

We first restate the lemma and then prove it.

\begin{lemma}
Let $f:\mbS^n\rightarrow\reals$ be $\beta$-smooth and convex. Let $\X^*\in\mS_n$ be an optimal solution of rank $r$ to the optimization problem $\min_{\X\in\mS_n}f(\X)$. Let $\mu_1,\dots,\mu_n$ denote the eigenvalues of $\nabla{}f(\X^*)$ in non-increasing order. Then, $\mu_{n-r}=\mu_{n}$ if and only if for any arbitrarily small $\zeta>0$ it holds that
\begin{eqnarray*}
\rank(\Pi_{(1+\zeta)\mS_n}[\X^*-\beta^{-1}\nabla{}f(\X^*)]) > r,
\end{eqnarray*}
where $(1+\zeta)\mS_n = \{(1+\zeta)\X~|~\X\in\mS_n\}$, and $\Pi_{(1+\zeta)\mS_n}[\cdot]$ denotes the Euclidean projection onto the convex set $(1+\zeta)\mS_n$.
\end{lemma}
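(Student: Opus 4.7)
The plan is to exploit the fact that $\X^*$ and $\nabla f(\X^*)$ share an eigenbasis (by Lemma \ref{lem:eigsOfOptGradSDP}), which reduces both projections to one-dimensional thresholding problems on a common list of eigenvalues; the equivalence in the lemma will then follow from tracking how the threshold parameter behaves as the trace budget increases from $1$ to $1+\zeta$.

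First I would write $\X^* = \sum_{i=1}^r \lambda_i \v_i\v_i^\top$ with $\lambda_i>0$, extend $\{\v_1,\dots,\v_r\}$ by orthonormal vectors $\{\w_1,\dots,\w_{n-r}\}$ to an eigenbasis of $\nabla f(\X^*)$, and invoke Lemma \ref{lem:eigsOfOptGradSDP} to conclude that each $\v_i$ is an eigenvector of $\nabla f(\X^*)$ with eigenvalue $\mu_n$, while the $\w_j$'s carry the remaining eigenvalues $\mu_1,\dots,\mu_{n-r}$. Consequently, $\X^*-\beta^{-1}\nabla f(\X^*)$ has $r$ eigenvalues of the form $\lambda_i-\beta^{-1}\mu_n$ (all strictly larger than $-\beta^{-1}\mu_n$, since $\lambda_i>0$) together with $n-r$ eigenvalues $-\beta^{-1}\mu_j$ (all at most $-\beta^{-1}\mu_n$).

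Next I would apply Lemma \ref{lem:spectrahedronProj} together with the first-order optimality condition $\Pi_{\mS_n}[\X^*-\beta^{-1}\nabla f(\X^*)]=\X^*$ to identify the thresholding constant for the spectrahedron as $\lambda^\star = -\beta^{-1}\mu_n$: indeed, this choice keeps exactly the $r$ eigenvectors $\v_i$ with weights $\lambda_i$ summing to $1$, and zeros out every $\w_j$ (since $\mu_j\geq\mu_n$). For the projection onto $(1+\zeta)\mS_n$ the same formula (trivially adapted) applies but with a threshold $\lambda^{\star\star}$ satisfying $\sum_k \max\{0,\alpha_k - \lambda^{\star\star}\} = 1+\zeta$, where $\alpha_k$ are the eigenvalues of $\X^*-\beta^{-1}\nabla f(\X^*)$; increasing the right-hand side forces $\lambda^{\star\star} < \lambda^\star = -\beta^{-1}\mu_n$.

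Now I would split into two cases. If $\mu_{n-r}>\mu_n$, then the $(r+1)$-th largest eigenvalue of $\X^*-\beta^{-1}\nabla f(\X^*)$ equals $-\beta^{-1}\mu_{n-r} < -\beta^{-1}\mu_n$, and there is a positive gap. In this case, for $\zeta$ sufficiently small (concretely $\zeta<r\beta^{-1}(\mu_{n-r}-\mu_n)$), only the $r$ leading eigenvalues exceed $\lambda^{\star\star}$, solving $r(-\beta^{-1}\mu_n-\lambda^{\star\star})=\zeta$ and keeping the rank at exactly $r$; this contradicts the right-hand-side conclusion, so the implication ``rank $>r$ for every $\zeta>0$ $\Rightarrow$ $\mu_{n-r}=\mu_n$'' holds. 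Conversely, if $\mu_{n-r}=\mu_n$, then some $\w_j$ has eigenvalue exactly $-\beta^{-1}\mu_n = \lambda^\star$; since any $\zeta>0$ forces $\lambda^{\star\star}<\lambda^\star$ strictly, this $\w_j$ gets activated by the thresholding, producing a projection of rank at least $r+1$. Combining both directions gives the stated equivalence.

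The argument is essentially bookkeeping once the shared eigenstructure is in hand; the only mild subtlety is the strict/non-strict comparison between $\lambda^{\star\star}$ and $-\beta^{-1}\mu_{n-r}$, which is where the gap assumption enters. There is no real obstacle, and the proof should be short — a few lines of eigenvalue accounting plus the two-case split.
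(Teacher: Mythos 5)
Your proposal is correct and follows essentially the same route as the paper's proof: both use Lemma \ref{lem:eigsOfOptGradSDP} to put $\X^*$ and $\nabla f(\X^*)$ in a common eigenbasis, list the eigenvalues of $\X^*-\beta^{-1}\nabla f(\X^*)$, and reduce the question to where the thresholding parameter of the projection onto $(1+\zeta)\mS_n$ falls relative to $-\beta^{-1}\mu_{n-r}$. The paper packages the two cases into a single chain showing the rank is at most $r$ if and only if $\zeta\leq \beta^{-1}r(\mu_{n-r}-\mu_n)$, whereas you split into cases and solve for the threshold explicitly, but the content is identical.
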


\begin{proof}
Let us write the eigen-decomposition of $\X^*$ as $\X^*=\sum_{i=1}^r\lambda_i\v_i\v_i^{\top}$. It follows from the optimality of $\X^*$ that for all $i\in[r]$, $\v_i$ is also an eigenvector of $\nabla{}f(\X^*)$ which corresponds to the smallest eigenvalue $\mu_n$ (see Lemma 7 in \cite{Garber19}). Thus, if we let $\rho_1,\dots,\rho_n$ denote the eigenvalues (in non-increasing order) of $\Y := \X^*-\beta^{-1}\nabla{}f(\X^*)$, it holds that
\begin{eqnarray*}
\forall i\in[r]: \quad \rho_i &=& \lambda_i - \beta^{-1}\mu_n;\\
\forall i>r: \quad \rho_i &=& \lambda_i - \beta^{-1}\mu_{n-i+1}.
\end{eqnarray*}

Recall that $\sum_{i=1}^r\lambda_i =1$ and $\lambda_{r+1} = 0$.

It is well known that for any matrix $\M\in\mbS^n$ with eigen-decomposition $\M=\sum_{i=1}^n\sigma_i\u_i\u_i^{\top}$, the projection of $\M$ onto the set $(1+\zeta)\mS_n$, for any $\zeta \geq 0$ is given by
\begin{eqnarray*}
\Pi_{(1+\zeta)\mS_n}[\M] = \sum_{i=1}^n\max\{0,~\sigma_i-\sigma\}\u_i\u_i^{\top},
\end{eqnarray*}
where $\sigma\in\reals$ is the unique scalar such that $\sum_{i=1}^n\max\{0,~\sigma_i-\sigma\} = 1+\zeta$.

Now, we can see that $\rank(\Pi_{(1+\zeta)\mS_n}[\Y]) \leq r$ if and only if $\sigma \geq \rho_{r+1} = -\beta^{-1}\mu_{n-r}$. However, in this case, we have
\begin{align*}
1 + \zeta &= \sum_{i=1}^n\max\{0,~\rho_i-\sigma\}  = \sum_{i=1}^r\max\{0,~\rho_i-\sigma\} \leq \sum_{i=1}^r\max\{0,~\rho_i-(-\beta^{-1}\mu_{n-r})\} \\
&= \sum_{i=1}^r(\rho_i-(-\beta^{-1}\mu_{n-r})) = \sum_{i=1}^r(\lambda_i +\beta(\mu_{n-r}-\mu_n)) \\
&= 1 + \beta{}r(\mu_{n-r}-\mu_{n}) < 1+\zeta \quad \forall \zeta > \beta{}r(\mu_{n-r}-\mu_n).
\end{align*}

Thus, for any fixed $\zeta > 0$, it follows that $\rank(\Pi_{(1+\zeta)\mS_n}[\Y]) \leq r$ if and only if  $\beta{}r(\mu_{n-r}-\mu_n) \geq \zeta$. This proves the lemma.
\end{proof}

\section{Proof of Corollary \ref{corr:samplecomplexity}}

We first restate the corollary.

\begin{corollary}
The overall sample complexity to achieve $f(\bar{\X})-f^*\leq \epsilon$ with probability at least $1/2$, when initializing from a ``warm-start'', is upper-bounded by $\tilde{O}\left({\epsilon^{-2}\max\{\sigma^2,\lambda_r^2(\X^*)rG^2\}}\right)$ (note that $\lambda_r(\X^*) \leq 1/r$).
\end{corollary}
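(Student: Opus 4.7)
The plan is a direct substitution: combine the iteration count $T$ needed by part 1 of Theorem~\ref{thm:main} with the mini-batch size $L$ prescribed there, and multiply to count stochastic oracle calls.

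First I would invert the convergence bound. Part 1 gives $f(\bar{\X})-f^*=O(GR_0\log T/\sqrt{T})$, so requiring this to be at most $\epsilon$ yields $T=\tilde{O}(G^2R_0^2/\epsilon^2)$ (the $\log T$ factor is absorbed by $\tilde{O}$). Multiplying by $L_0=O(\max\{(\sigma/(GR_0))^2,\,B^2r^2\log(nT)/\delta^2\})$ gives
\begin{align*}
T\cdot L_0 \;=\; \tilde{O}\!\left(\max\!\left\{\frac{\sigma^2}{\epsilon^2},\; \frac{G^2R_0^2\,B^2 r^2}{\epsilon^2\delta^2}\right\}\right).
\end{align*}
The first term inside the max already matches the $\sigma^2/\epsilon^2$ piece of the corollary, so only the second needs further simplification.

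Second, I would plug in the definition $R_0=\Theta(\delta/(r\beta+\sqrt{r}B/\lambda_r(\X^*)))$ and use the trivial inequality $r\beta+\sqrt{r}B/\lambda_r(\X^*)\ge \sqrt{r}B/\lambda_r(\X^*)$ to get
\begin{align*}
\frac{G^2R_0^2\,B^2 r^2}{\delta^2} \;\leq\; \frac{G^2B^2 r^2\,\lambda_r^2(\X^*)}{r B^2} \;=\; G^2\,r\,\lambda_r^2(\X^*),
\end{align*}
which yields the claimed sample complexity $\tilde{O}(\epsilon^{-2}\max\{\sigma^2,\,r\lambda_r^2(\X^*)G^2\})$.

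There is no genuine obstacle here; the argument is essentially bookkeeping on the parameters already nailed down in Theorem~\ref{thm:main}. The only things to keep track of are: (i) the $\log(nT)$ factor inside $L_0$ and the $\log T$ inside the convergence bound are silently absorbed into $\tilde{O}$; (ii) the ``warm-start'' cost is deliberately excluded from this count (the footnote in the corollary makes this explicit --- only the $\delta$-independence of the \emph{optimization} phase is being asserted); and (iii) the algebraic step that eliminates $B,\beta$ in favor of $\lambda_r(\X^*)$ uses a one-sided bound on $R_0$, so the result is a clean upper bound regardless of which of the two terms in the denominator of $R_0$ dominates.
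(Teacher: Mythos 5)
Your proposal is correct and follows essentially the same route as the paper's own proof: invert the rate from Theorem \ref{thm:main} to get $T=\tilde{O}(G^2R_0^2/\epsilon^2)$, multiply by $L_0$, and then cancel the $B$ and $\delta$ dependence by lower-bounding the denominator of $R_0$ by its $\sqrt{r}B/\lambda_r(\X^*)$ term. No gaps; the bookkeeping and the absorption of logarithmic factors into $\tilde{O}(\cdot)$ match the paper exactly.
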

\begin{proof}
The overall sample complexity is given simply by the number of iterations to reach $\epsilon$ error times the size of the minibatch and is thus upper-bounded by:
{\small
\begin{align*}
&\tilde{O}\left({\frac{G^2R_0^2}{\epsilon^2}\cdot{}\max\{(\sigma/G)^2R_0^{-2},\frac{B^2r^2}{\delta^2}\}}\right) = \tilde{O}\left({\frac{1}{\epsilon^2}\max\{\sigma^2,\frac{R_0^2G^2B^2r^2}{\delta^2}\}}\right)=\\
&\tilde{O}\left({\frac{1}{\epsilon^2}\max\{\sigma^2,\left({\frac{\lambda_r(\X^*)}{\lambda_r(\X^*)r\beta+\sqrt{r}B}}\right)^2G^2B^2r^2\}}\right)=\\
& \tilde{O}\left({\frac{1}{\epsilon^2}\max\{\sigma^2,\left({\frac{\lambda_r(\X^*)}{\sqrt{r}B}}\right)^2G^2B^2r^2\}}\right) = \tilde{O}\left({\frac{1}{\epsilon^2}\max\{\sigma^2,\lambda_r^2(\X^*)rG^2\}}\right).
\end{align*} 
}
\end{proof}

\end{document}